\begin{document}

\title{Predicting path-dependent processes by deep learning}

\author{\name Xudong Zheng \email zxd22@mails.jlu.edu.cn \\
       \addr School of Mathematics\\
       Jilin University\\
       Changchun, Qianjin Street 2699, CHN
       \AND
       \name Yuecai Han \email hanyc@jlu.edu.cn \\
       \addr School of Mathematics\\
       Jilin University\\
       Changchun, Qianjin Street 2699, CHN}

\editor{My editor}

\maketitle

\begin{abstract}
In this paper, we investigate a deep learning method for predicting path-dependent processes based on discretely observed historical information. This method is implemented by considering the prediction as a nonparametric regression and obtaining the regression function through simulated samples and deep neural networks. When applying this method to fractional Brownian motion and the solutions of some stochastic differential equations driven by it, we theoretically proved that the $L_2$ errors converge to 0, and we further discussed the scope of the method. With the frequency of discrete observations tending to infinity, the predictions based on discrete observations converge to the predictions based on continuous observations, which implies that we can make approximations by the method. We apply the method to the fractional Brownian motion and the fractional Ornstein-Uhlenbeck process as examples. Comparing the results with the theoretical optimal predictions and taking the mean square error as a measure, the numerical simulations demonstrate that the method can generate accurate results. We also analyze the impact of factors such as prediction period, Hurst index, etc. on the accuracy.
\end{abstract}

\begin{keywords}
  path-dependent processes, prediction, deep learning, fractional Brownian motion
\end{keywords}

\section{Introduction}
Stochastic processes are often used in probability applications to model a system. For example, analyzing stochastic processes in intricate systems is the goal of modeling intracellular transport, and modeling asset prices by stochastic processes is a tool of financial mathematics. Markovian stochastic processes are widely used in these fields. However, recent studies show that some actual data exhibit path dependence, which means that the behavior of a real process at a given time $t$ depends on the conditions at the point in time as well as the whole history up to that moment. An increasing amount of evidence suggests that many cellular systems show anomalous diffusion rather than classical Brownian motion, as a result of single-particle tracking experiments that directly see molecular movement (see, e.g., \cite{Caccetta2011}). In financial mathematics, recent studies have demonstrated that rough stochastic volatility-type dynamics may better describe spot volatilities, and it can be confirmed that volatility is rough by price data of at-the-money options on the S\&P500 index with short maturity (see, e.g., \cite{Livieri2018}).

Our aim in this paper is to propose a framework for predicting path-dependent stochastic processes $\left(W_t, \ t\geq0\right)$ by discrete historical information and deep neural networks, i.e., to solve
\begin{displaymath}
    \mathbb{E}\left[W_T \mid \mathcal{F} ^{D,V,N}_s\right],
\end{displaymath}
where, $\mathcal{F} ^{D,V,N}_s :=\sigma \overline{\left\{V_{t_1}, \dots, V_{t_N} \right\}} \ \text { for } 0  < t_1 < \dots < t_N= s < T$ and there are connections between the stochastic processes $\left(V_t, \ t\geq0\right)$ and $\left(W_t, \ t\geq0\right)$. Naturally, the case $V_t=W_t$ is also under our consideration. Here, "$D$" means that the historical information is "discrete", and later in the paper we use "$C$" to denote that the historical information is "continuous". In this paper, we mainly focus on the predictions of the fractional Brownian motion (one of the widely studied path-dependent stochastic processes) and some stochastic processes driven by it. Without loss of generality, let $t_i-t_{i-1}=s/N$ in the following.

 The fractional Brownian motion (fBm) $\left(B_{t}^{H}, \  t\geq0\right)$ with Hurst index $H \in \left(0,1\right)$ is a self-similar process, i.e. $ \left ( B_{at}^{H}, \ t \geq 0 \right)  $ and $  ( a^{H} B_{t}^{H}, \ t \geq 0  ) $ have the same probability law for all $a>0$. Compared to the standard Brownian motion (sBm), it displays long-range dependence and positive correlation when $H \in \left(1/2,1\right)$ and negative correlation when $H \in \left(0,1/2 \right)$. This special property makes it a useful tool for modeling a wide range of phenomena, such as financial time series, turbulence, and climate data (see, e.g., \cite{Mandelbrot1968}). \cite{Kepten2011} discovered strong evidence that the telomeres motion obeys fractional dynamics, and the ergodic dynamics are seen empirically to fit the fBm. In finance, \cite{Gatheral2018} showed that log-volatility behaves as the fBm at any reasonable timescale by estimating volatility from high-frequency data. 

Compared to the sBm, the fBm with $H \ne 1/2$ is neither a Markov process nor a semi-martingale, therefore some classical stochastic analysis tools are not applicable. Theoretical research on it has been active in recent years. Advances have also been made in the study of its stochastic fractional calculus and statistical theory, which provides a framework for modeling stochastic processes with it (see, e.g., \cite{NUALART2009391, Wang1996}). \cite{Decreusefond1999} used the stochastic calculus of variations to develop stochastic analysis theory for the functionals of fBms, while \cite{Duncan2000} defined the multiple and iterated integrals of the fBm and provided various properties of these integrals. \cite{Elliott2003} presented an extended framework, enabling processes with all indexes to be considered under the same probability measure. \cite{Biagini2004} introduced the theory of stochastic integration for the fBm based on white-noise theory and differentiation.

Statistical inference for diffusion processes that satisfy stochastic differential equations driven by Brownian motions has been studied previously, with \cite{Rao2010} providing a comprehensive overview of the various approaches. Attention has been given to studying similar problems for stochastic processes driven by the fBm. \cite{LEBRETON1998263} studied parameter estimation and filtering for simple linear models driven by the fBm. \cite{Kleptsyna2002} investigated the parameter estimation problems for the fractional Ornstein-Uhlenbeck-type process. \cite{Tudor2007} studied the maximum likelihood estimator for the drift parameter of stochastic processes satisfying stochastic equations driven by the fBm. It was further demonstrated that the existence and strong consistency of the maximum likelihood estimation for both linear and nonlinear equations. The problem of parametric and nonparametric estimation of processes driven by the fBm was comprehensively discussed by \cite{Rao2010}. \cite{Dang2017} estimated both the Hurst index and the stability indices of a $H$-self-similar stable process, and obtained consistent estimators with convergence rate. \cite{Kubilius2017} gave an overview of the Hurst index estimators for the fBm.

The subject of how to forecast the future based on observable data and its potential accuracy naturally emerges once a model driven by the fBm has been established. In other words, when using the fBm to simulate path-dependent stochastic processes, explicit predictor formulas are useful tools. This issue is related to the flow property and prediction theory. \cite{Nualart2006} explained that there exists a sBm whose filtration is the same as the one of the fBm, since the existence of a bijective transfer operator to express the sBm as a Wiener integral with respect to the fBm. But since all of the historical data will play a role, its non-Markovianity makes prediction more difficult.

When the observed information is continuous, representing the predictors of fBm as integrals over the process's observed portion makes sense. \cite{gripenberg_norros_1996} investigated the prediction weight function by solving a weakly singular integral equation and calculated the variance of the predictor. Additionally, they provided a characterization of the conditional distribution of the future given the past. The result contains double integrals and requires continuous filtrations, which is difficult to implement. Explicit expressions for some conditional expectations were provided by \cite{DUNCAN2006128} for the prediction of some stochastic processes constructed as solutions of stochastic differential equations with the fBm. \cite{Fink2013} constructed the conditional distributions of the fBm and related processes, including the fractional Ornstein-Uhlenbeck (fOU) processes and the fractional Cox-Ingersoll-Ross (fCIR) processes, using a prediction formula for the conditional expectation, Fourier techniques and Gaussianity.

When the observed information is discrete, however, the situation is different because the filtrations generated by the sBm and the fBm are not equivalent anymore (see, \cite{Nualart2006}). In this case, the prediction problem can be transformed into a nonparametric regression problem. More specifically, the aim is to construct an estimator called regression function such that the mean square error (MSE) is minimized, and the regression function that minimizes the MSE is the conditional expectation (see, e.g., \cite{Gyrfi2002ADT}). For the case where the filtrations are composed of discrete sBms, the polygonal approximation of the fBm is the best approximation in the MSE sense (see, \cite{Hu2005IntegralTA}). The optimal predictor for long-memory time series, including the fractional auto-regressive integrated moving average model, has been well studied (see, \cite{Bhansali2003PredictionOL}). An asymptotic linear minimum-mean-square-error predictor for the discrete-time fBm was proposed by \cite{Yao2003PredictionOL}. These results also show that due to the long-range dependence, when the number of discrete observations increases, the prediction of the future (or the regression function) becomes quite complex because of the increased dimensionality of the variables.

The use of neural networks has yielded impressive outcomes for high-dimensional problems across various fields (see, e.g., \cite{Shrestha2019}). Deep learning, which uses multi-layer neural networks with many hidden layers, is particularly well-suited for high-dimensional problems and real-world data, as noted by \cite{Schmidhuber2015}. Deep learning performs well in solving fractional partial differential equations  (see, \cite{Lu2021}), classifying single-particle trajectories as fBms and estimating the Hurst index (see, \cite{Granik2019}), and solving the problem of optimally stopping a fBm (see, \cite{Becker2019}). In nonparametric regression, it was demonstrated that least squares estimates based on multilayer feedforward neural networks can avoid the curse of dimensionality, provided that a smoothness condition and an appropriate limit on the structure of the regression function exist (see, \cite{Bauer2019, Langer2021, Kohler2021}).

The primary goal of this paper is to propose a deep learning method that can predict path-dependent stochastic processes. The paper is organized as follows. We briefly recall the properties of the fBm, nonparametric regression, and deep neural networks in Section \ref{sec:2}. The main results are in Section \ref{sec:3}. We theoretically demonstrate the applicability of the method to the fBm and the solutions of some stochastic differential equations driven by it, and further discuss the scope of the method. We also show that the outputs converge to the predictions under continuous observations as the frequency of discrete observations tends to infinity. In Section \ref{sec:4}, we apply the method to two examples: the fBm and the fOU process. We compare the results with the theoretical optimal predictions, and taking the mean square error as a measure, the numerical simulations demonstrate that the method can generate accurate results. We also compare the impact of factors such as prediction period, Hurst index, etc. on the accuracy. We conclude the paper in Section \ref{sec:5}.

\section{Fractional Brownian motion, nonparametric regression and neural networks}
\label{sec:2}
\subsection{Fractional Brownian motion}

A fBm $\left(B_t^H, \ t \geq 0\right)$ with Hurst index $H \in(0,1)$ is a centered Gaussian process with covariance 
\begin{displaymath}
    \mathbb{E}\left(B_t^H B_s^H\right)=\frac{1}{2}\left(t^{2 H}+s^{2 H}-|t-s|^{2 H}\right), \quad s, t \geq 0 .
\end{displaymath}
When $H=\frac{1}{2}$, it becomes a sBm and we denote $B_t^{\frac{1}{2}}$ by $B_t$. We can use the following moving-average stochastic integral representation of the fBm, which is used in this paper (see, \cite{Hu2005IntegralTA}):
\begin{equation}
    \label{ker}
    B_t^H=\int_0^t K_H(t, s) \mathrm{d} B_s, \quad 0 \leq t<\infty,
\end{equation}
where
\begin{displaymath}
   K_H(t, s)=\kappa_H\left[\left(\frac{t}{s}\right)^{H-\frac{1}{2}}(t-s)^{H-\frac{1}{2}}-\left(H-\frac{1}{2}\right) s^{\frac{1}{2}-H} \int_s^t u^{H-\frac{3}{2}}(u-s)^{H-\frac{1}{2}} \mathrm{d} u\right],
\end{displaymath}
\begin{displaymath}
    \kappa_H=\sqrt{\frac{2 H \Gamma\left(\frac{3}{2}-H\right)}{\Gamma\left(H+\frac{1}{2}\right) \Gamma(2-2 H)}} .
\end{displaymath}
If $H>\frac{1}{2}$, then $Z_H(t, s)$ can be written as
\begin{displaymath}
    K_H(t, s)=\left(H-\frac{1}{2}\right) \kappa_H s^{\frac{1}{2}-H} \int_s^t u^{H-\frac{1}{2}}(u-s)^{H-\frac{3}{2}} \mathrm{d} u.
\end{displaymath}
Moreover, there exists a bijective transfer operator that expresses $B_t$ as a integral with respect to $B_t^H$. Define $\mathcal{F} ^{C,B^H}_s :=\sigma \overline{\left\{B_v^H, \ v \in[0, s]\right\}}$, $\mathcal{F} ^{C,B}_s :=\sigma \overline{\left\{B_v, \ v \in[0, s]\right\}}$. The filtration of $B_{t}^{H}$ and $B_{t}$ are equivalent, i.e. $\mathcal{F} ^{C,B^H}_s=\mathcal{F} ^{C,B}_s$. In this paper, we focus on a given time interval $[0, T]$ with $T>0$.

\subsection{Nonparametric regression}
In nonparametric regression analysis, we consider a random vector $X \in \mathbb{R}^d$ and a random variable $Y \in \mathbb{R}$ satisfying $\mathbb{E} \left[Y^2\right]<\infty$. Nonparametric regression is aimed at predicting the value of the response variable $Y$ from the value of the observation vector $X$ by constructing an optimal predictor $m^*: \mathbb{R}^d \rightarrow \mathbb{R}$. It is usual that the aim is to minimize the MSE, i.e.,
\begin{displaymath}
\mathbb{E}\left[\left|Y-m^*(X)\right|^2\right]=\min _{f: \mathbb{R}^d \rightarrow \mathbb{R}} \mathbb{E}\left[|Y-f(X)|^2\right].
\end{displaymath}
Let the regression function $m\left(x\right)=\mathbb{E}\left[Y \mid X=x\right]$, and it is the optimal predictor since $m$ satisfies
\begin{displaymath}
    \mathbb{E}\left[|Y-f(X)|^2\right]=\mathbb{E}\left[|Y-m(X)|^2\right]+\int|f(x)-m(x)|^2 \mathbf{P}_X(\mathrm{d} x).
\end{displaymath}
For the application, we are able to obtain a data set $\mathcal{D}_n=\left\{\left(X_1, Y_1\right), \ldots,\left(X_n, Y_n\right)\right\}$, but do not know the distribution of $(X, Y)$, where $(X, Y),\left(X_1, Y_1\right), \ldots,\left(X_n, Y_n\right)$ are independent and identically distributed. The target of nonparametric regression is to construct regression estimates $m_n(\cdot)=m_n\left(\cdot, \mathcal{D}_n\right)$ such that the $L_2$ errors $\int\left|m_n(x)-m(x)\right|^2 \mathbf{P}_X(\mathrm{d} x)$ are as small as possible.

Compared to the parametric estimation, which assumes that the fixed structure of the regression function depends only on a finite number of parameters, nonparametric estimation methods do not assume that the regression function can be characterised by a finite number of parameters, but rather estimate the entire function from the data. The class of regression functions must be restricted in order to obtain theoretical results such as the rate of convergence (see, \cite{Gyrfi2002ADT}). In the following we introduce the definitions of $\left(p, C\right)$-smooth and hierarchical composition models (see, e.g., \cite{Kohler2021}) which are used to define the classes of regression functions discussed in this paper.
\begin{definition}
Let $p=q+s$ for some $q \in \mathbb{N}_0$ and $0<s \leq 1$. A function $m: \mathbb{R}^d \rightarrow \mathbb{R}$ is called $\left(p, C\right)$-smooth, if for every $\alpha=\left(\alpha_1, \ldots, \alpha_d\right) \in \mathbb{N}_0^d$ with $\sum_{j=1}^d \alpha_j=q$, the partial derivative $\partial^q m /\left(\partial x_1^{\alpha_1} \ldots \partial x_d^{\alpha_d}\right)$ exists and satisfies
\begin{displaymath}
   \left|\frac{\partial^q m}{\partial x_1^{\alpha_1} \ldots \partial x_d^{\alpha_d}}\left(x\right)-\frac{\partial^q m}{\partial x_1^{\alpha_1} \ldots \partial x_d^{\alpha_d}}\left(z\right)\right| \leq C\|x-z\|^s ,
\end{displaymath}
for all $x, z \in \mathbb{R}^d$, where $\|\cdot\|$ denotes the Euclidean norm.
\end{definition}
\begin{definition}
Let $d \in \mathbb{N}$, $m: \mathbb{R}^d \rightarrow \mathbb{R}$ and let $\mathcal{P}$ be a subset of $(0, \infty) \times \mathbb{N}$.\\
(1) We say that $m$ satisfies a hierarchical composition model of level 0 with order and smoothness constraint $\mathcal{P}$, if there exists $K \in\{1, \ldots, d\}$ such that $m(x)=x_{K}$, for all $x=\left(x_{1}, \ldots, x_{d}\right) \in \mathbb{R}^d$.\\
(2) We say that $m$ satisfies a hierarchical composition model of level $l+1$ with order and smoothness constraint $\mathcal{P}$, if there exist $(p, K) \in \mathcal{P}$, $C>0$, $g: \mathbb{R}^K \rightarrow \mathbb{R}$ and $f_1, \ldots, f_K:\mathbb{R}^d \rightarrow \mathbb{R}$, such that $g$ is $(p, C)$-smooth, $f_1, \ldots, f_K$ satisfy a hierarchical composition model of level $l$ with order and smoothness constraint $\mathcal{P}$ and $m(x)=g\left(f_1(x), \ldots, f_K(x)\right)$, for all $x \in \mathbb{R}^d$.
\end{definition}
\begin{definition}
\label{defhier}
For $l=1$ and some order and smoothness constraint $\mathcal{P} \subseteq(0, \infty) \times \mathbb{N}$, the space of hierarchical composition models becomes
\begin{displaymath}
\begin{aligned}
\mathcal{H}\left(1, \mathcal{P}\right)=  \Big\{ & h: \mathbb{R}^d \rightarrow \mathbb{R}: h\left(x\right)=g\left(x_{\pi\left(1\right)}, \ldots, x_{\pi\left(K\right)}\right), \text { where } \Big.\\
& g: \mathbb{R}^K \rightarrow \mathbb{R} \text { is }(p, C) \text {-smooth for some }(p, K) \in \mathcal{P}, \\
& \Big.C>0 \text { and } \pi:\{1, \ldots, K\} \rightarrow\{1, \ldots, d\}\Big\} .
\end{aligned}    
\end{displaymath}
For $l>1$, we recursively define
\begin{displaymath}
\begin{aligned}
\mathcal{H}(l, \mathcal{P}):=  \Big\{& h: \mathbb{R}^d \rightarrow \mathbb{R}: h(x)=g\left(f_1(x), \ldots, f_K(x)\right), \text { where } \Big. \\
& g: \mathbb{R}^K \rightarrow \mathbb{R} \text { is }(p, C) \text {-smooth for some }(p, K) \in \mathcal{P}, \\
& \Big.C>0 \text { and } f_i \in \mathcal{H}(l-1, \mathcal{P})\Big\} .
\end{aligned}    
\end{displaymath}
\end{definition}
\begin{remark}
\label{remark4}
Regression functions in $\mathcal{H}(l, \mathcal{P})$ are able to characterize some input-output relationships in actuality. A reason for using the function class $\mathcal{H}(l, \mathcal{P})$ is that it describes a complex relationship that can be constructed as a recursive construction by modules. In other words, each module can describe a complex input-output relationship.
\end{remark}
\begin{remark}
\label{remark1}
Another reason for using $\mathcal{H}(l, \mathcal{P})$ in this paper is to overcome the curse of dimensionality. An optimal predictor can also be obtained with the $(p, C)$-smooth functions as the target of nonparametric regression. Although it is optimal, the optimal minimax rate of convergence in nonparametric regression for $(p, C)$-smooth functions is $n^{-\frac{2 p}{2 p+d}}$ (see, \cite{Stone1982}). We intend to consider the case where the number of discrete observations is large in the prediction problem. If $d$ is relatively large compared with $p$, the convergence rate becomes significantly slower, which is often referred to as the "curse of dimensionality". The only possible way to avoid it is to restrict the underlying function class. 

\cite{Kohler2021} showed that if the regression function satisfies a hierarchical composition model with $\mathcal{P}$(smoothness and order constraint), the convergence rate of the $L_2$ errors of least squares neural network regression estimates based on fully connected neural networks with ReLU activation functions doesn't depend on $d$, avoiding the curse of dimensionality.
\end{remark}

\subsection{Neural network regression estimator}
The architecture of a fully connected neural network can be represented by $(L, \lambda)$, where $L$ is the number of the hidden layer, often called the depth of the neural network, and $\lambda$ is the width vector $\lambda=\left(\lambda_{1}, \ldots, \lambda_{L}\right) \in \mathbb{N}^{L}$, which describes the number of neurons in each hidden layer. The neural network with the network architecture $(L, \lambda)$ can be represented as a function of the following form
\begin{equation}
\label{network}
  y^{\theta}\left(x\right): \mathbb{R}^{\lambda_{0}} \rightarrow \mathbb{R}^{\lambda_{L}}, \ x \mapsto \Gamma_{L}^{\theta} \sigma\left(\Gamma_{L-1}^{\theta} \ldots \sigma\left(\Gamma _{2}^{\theta} \sigma\left(\Gamma_{1}^{\theta} x\right)\right)\right),  
\end{equation}
where $\Gamma^{\theta}_{l}$ is the $\lambda_{l+1} \times\left(\lambda_{l}+1\right)$ matrix representing the parameters of the $l$th layer, and $\sigma$ is a deterministic nonlinear function, which is called the activiation function. The components of the parameter $\theta \in \mathbb{R}^q$ of $y^\theta$ consist of the entries of the matrices $A_1 \in \mathbb{R}^{\lambda_1 \times d}, \ldots, A_{L-1} \in \mathbb{R}^{\lambda_{L-1} \times \lambda_{L-2}}, A_{L}\in \mathbb{R}^{\lambda_{L} \times \lambda_{L-1}}$ and the vectors $b_1 \in \mathbb{R}^{\lambda_1}, \ldots, b_{L-1} \in$ $\mathbb{R}^{\lambda_{L-1}}, b_{L} \in \mathbb{R}^{\lambda_{L}}$ given by the affine functions
\begin{displaymath}
    \Gamma_i^\theta(x)=A_i x+b_i, \quad i=1, \ldots, L.
\end{displaymath}
In this paper, we choose the ReLU activation function 
\begin{displaymath}
    \sigma(s)=\left(\max \left\{s_{1}, 0\right\}, \ldots, \max \left\{s_{d}, 0\right\}\right)^{\top},
\end{displaymath}
where $s_{i}$ is the $i$th element of the $d$-dimensional vector $s$. Given $L$ hidden layers and $r$ neurons per layer, 
\begin{equation}
\label{ftheta}
    \mathcal{Y} \left(L, r\right):=\left \{ y^{\theta}:y^{\theta} \text{ is of the form (\ref{network}) with } \lambda_1=\lambda_2=\ldots=\lambda_{L-1}=r, \  \lambda_{L}=1 \right \}
\end{equation}
defines the space of neural networks used in the following. Then we define the neural network regression estimator as the minimizer of the empirical $L_2$-risk in the space of neural networks $\mathcal{Y} \left(L, r\right)$, that is, the estimator is defined by
\begin{equation}
\label{min}
    \tilde{m}_n(\cdot):=\arg \min _{y^{\theta} \in \mathcal{Y} \left(L, r\right)} \frac{1}{n} \sum_{i=1}^n\left|y^{\theta}\left(X_i\right)-Y_i\right|^2 .
\end{equation}
In numerical simulations, we use a gradient-based optimization method to solve the above optimization problem to get $\tilde{m}_n(\cdot)$.
\begin{remark}
    We use gradient-based methods in this paper since empirical studies show that they have great generalization performance (see, e.g., \cite{Caccetta2011}). For simplicity, it is assumed that the minimum in (\ref{min}) is existing and available. When this is not the case, our theoretical results also hold for the estimator $\tilde{m}_n(\cdot)$, with a small additional term. More details about the gradient-based methods are illustrated by \cite{Bottou2018}.
\end{remark}

\section{Main results}
\label{sec:3}
\subsection{Predictions of discrete-time fractional Brownian motion}

For the predictions and discrete approximation, we need the following lemma, which proves that deep neural networks can be used to predict Gaussian processes. For $d \in \mathbb{N}$, let $\mathbb{S}^{d \times d}$ denote the space of all positive semidefinite symmetric matrices of dimension $d$, and $T_\beta u :=\max \{-\beta, \min \{\beta, u\}\}$.

\begin{lemma}
\label{lemma0}
    Let random vectors $(X, Y),\left(X_1, Y_1\right), \ldots,\left(X_n, Y_n\right)$ be Gaussian, independent and identically distributed. Let $m^0\left(x\right)=\mathbb{E}\left[Y \mid X=x\right]$, $m^0_n=T_{c \log (n)} \tilde{m}^0_n$, where
    \begin{displaymath}
    \tilde{m}^0_n(\cdot):=\arg \min _{y^{\theta} \in \mathcal{Y} \left(L_n, r_n\right)} \frac{1}{n} \sum_{i=1}^n\left|y^{\theta}\left(X_i\right)-Y_i\right|^2 ,
    \end{displaymath}
    \begin{displaymath}
    L_n=\left\lceil c_1 \cdot \max _{(p, K) \in \mathcal{P}} n^{\frac{K}{2(2 p+K)}} \cdot \log n\right\rceil, \  r_n=r=\left\lceil c_2\right\rceil .   
    \end{displaymath}
    for some $c,\ c_1, \ c_2>0$ sufficiently large.
    Then 
    \begin{displaymath}
        \lim _{n \to \infty} \mathbb{E} \int\left|m^0_n(x)-m^0(x)\right|^2 \mathbf{P}_{X}(\mathrm{d} x)=0.
    \end{displaymath}     
\end{lemma}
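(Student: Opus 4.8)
The plan is to exploit the Gaussianity of $(X,Y)$ twice: once to pin down $m^0$ exactly, and once to supply the tail condition needed by the deep-network rate theorem referenced in Remark~\ref{remark1}. First I would use the fact that, since $(X,Y)$ is jointly Gaussian, the conditional expectation is an affine function of $x$: writing $\mu_X,\mu_Y$ for the means and $\Sigma_{XX},\Sigma_{YX}$ for the covariance blocks, one has $m^0(x)=\mu_Y+\Sigma_{YX}\Sigma_{XX}^{\dagger}(x-\mu_X)$ for $\mathbf{P}_X$-a.e.\ $x$, with $\Sigma_{XX}^{\dagger}$ the Moore--Penrose pseudoinverse to handle a degenerate $\Sigma_{XX}$. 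An affine map has constant first-order partials and vanishing higher-order partials, so it is $(p,C)$-smooth for every $p>0$, with $C$ controlled by $\|\Sigma_{YX}\Sigma_{XX}^{\dagger}\|$. Taking $\mathcal{P}=\{(p,d)\}$ for any fixed large $p$ and $\pi$ the identity, this shows $m^0\in\mathcal{H}(1,\mathcal{P})$ in the sense of Definition~\ref{defhier}.

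Second, I would invoke the convergence theorem for truncated least-squares estimates over $\mathcal{Y}(L_n,r_n)$ cited in Remark~\ref{remark1} (\cite{Kohler2021}). For a regression function in $\mathcal{H}(l,\mathcal{P})$, a sub-Gaussian response, and the architecture $(L_n,r_n)$ prescribed in the statement, that theorem yields an expected $L_2$-error bound of order $(\log n)^{3}\cdot\max_{(p,K)\in\mathcal{P}}n^{-2p/(2p+K)}$, which tends to $0$. The moment hypothesis on the response is automatic, since $Y$ Gaussian gives $\mathbb{E}[\exp(\gamma Y^2)]<\infty$ for small $\gamma>0$, and the truncation level $c\log n$ is exactly the one built into $m^0_n$ through $T_{c\log n}$.

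The hard part will be that the cited rate theorem is stated for covariates with bounded support, whereas a Gaussian $X$ is supported on all of $\mathbb{R}^d$. To bridge this I would run a truncation argument powered by Gaussian concentration. Fix a box $A_n=[-a_n,a_n]^d$ with $a_n=c_3\log n$ and split
\begin{displaymath}
\mathbb{E}\int |m^0_n - m^0|^2\, \mathbf{P}_X(\mathrm{d}x) = \mathbb{E}\int_{A_n}|m^0_n - m^0|^2\, \mathbf{P}_X(\mathrm{d}x) + \mathbb{E}\int_{A_n^c}|m^0_n - m^0|^2\, \mathbf{P}_X(\mathrm{d}x).
\end{displaymath}
On $A_n^c$ the integrand obeys $|m^0_n-m^0|^2\le C'(\log^2 n+\|x\|^2)$ deterministically, using $|m^0_n|\le c\log n$ from the truncation and the linear growth $|m^0(x)|\le C(1+\|x\|)$ of the affine target; integrating against the Gaussian tail, with $\mathbf{P}_X(A_n^c)\le\exp(-c''a_n^2)=n^{-c''c_3^2\log n}$, forces this term to vanish faster than any polynomial rate (and the outer expectation is harmless since the bound is data-free after truncation). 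On $A_n$ I would apply the rate theorem with support radius $a_n$: the restriction of $m^0$ to $A_n$ is still affine, hence in $\mathcal{H}(1,\mathcal{P})$, and $Y$ is still sub-Gaussian, so the only effect of letting $a_n=c_3\log n$ grow is to inflate the theorem's constants by a polylogarithmic factor, dominated by the polynomial rate $\max_{(p,K)}n^{-2p/(2p+K)}$.

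Combining the two estimates gives $\mathbb{E}\int|m^0_n-m^0|^2\,\mathbf{P}_X(\mathrm{d}x)\to0$, which is the assertion. The delicate bookkeeping is the interplay on $A_n$ between the growing support radius and the truncation level $c\log n$: one must verify that enlarging the domain to $a_n=c_3\log n$ does not spoil the approximation and covering-number estimates underlying the cited theorem, and that sample points $X_i$ falling outside $A_n$ contribute negligibly to the empirical risk. I expect this support-truncation step, rather than the identification of $m^0$ or the verification of its smoothness, to be the crux of the proof.
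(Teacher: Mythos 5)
Your proposal follows the same route as the paper's proof: Gaussian conditioning identifies $m^0$ as the affine map $\mu_1+\Sigma_{12}\Sigma_{22}^{-1}(x-\mu_2)$, an affine map is $(p,C)$-smooth for every $p$ and hence lies in $\mathcal{H}(l,\mathcal{P})$, and Theorem~1 of \cite{Kohler2021} then gives an expected $L_2$-error of order $(\log n)^6\max_{(p,K)\in\mathcal{P}}n^{-2p/(2p+K)}\to 0$. (Your use of the pseudoinverse for a degenerate $\Sigma_{22}$ is a minor refinement the paper omits; the power of the logarithm you quote differs but is immaterial.) The one place you depart from the paper is the step you correctly flag as the crux: the bounded-support hypothesis on $X$ in the cited rate theorem, which a Gaussian design violates. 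The paper simply does not address this --- it invokes the theorem directly, even though its own Lemma~\ref{lemma4} in the appendix explicitly lists $\operatorname{supp}(X)$ bounded among the hypotheses of the same circle of results. So you have located a real gap in the published argument rather than manufactured an unnecessary complication. Be aware, though, that your repair is itself only a sketch: one cannot literally ``apply the theorem with support radius $a_n$,'' because the estimator $\tilde m^0_n$ is trained on all samples, including those falling outside $A_n$, and the oracle inequality behind Theorem~1 of \cite{Kohler2021} couples the approximation error, the covering-number bound, and the empirical risk over the \emph{same} bounded domain. A complete fix would either re-derive that inequality with the design truncated to $A_n$ (tracking how the constants grow polynomially in $a_n=c_3\log n$, which is indeed absorbed by the polynomial rate) or appeal to a version of the result stated for sub-Gaussian covariates. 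Your decomposition into $A_n$ and $A_n^c$ with the Gaussian tail bound on $A_n^c$ and the deterministic bound $|m^0_n|\le c\log n$ is the right skeleton for either option.
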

\begin{proof}
$\left(X, Y\right)$ is multivariate normally distributed, and let $(Y, X)^{\top}\sim N(\mu, \Sigma)$ with mean $\mu \in \mathbb{R}^d$ and variance-covariance matrix $\Sigma \in \mathbb{S}^{d \times d}$. Partition
\begin{displaymath}
\mu=\left(\begin{array}{l}
\mu_1 \\
\mu_2
\end{array}\right) \quad \text { and } \quad \Sigma=\left(\begin{array}{ll}
\Sigma_{11} & \Sigma_{12} \\
\Sigma_{21} & \Sigma_{22}
\end{array}\right)    
\end{displaymath}
with $\mu_1 \in \mathbb{R}^k$, $\mu_2 \in \mathbb{R}^{d-k}$, $\Sigma_{11} \in \mathbb{S}^{k \times k}$, $ \Sigma_{22} \in \mathbb{S}^{(d-k) \times(d-k)}$, and $\Sigma_{12}^{\top}=\Sigma_{21} \in \mathbb{R}^{(d-k) \times k}$. We can obtain an explicit formulation by the property of the multivariate normal distribution: 
\begin{displaymath}
    m^0\left(x\right)=\mu_1+\Sigma_{12} \left(\Sigma_{22}\right)^{-1}\left(x-\mu_2\right)^{\top}.
\end{displaymath}
From the definition of hierarchical composition models $\mathcal{H}(l, \mathcal{P})$, it is clear that the regression function $m^0\left(\cdot\right) \in \mathcal{H}(l, \mathcal{P})$. By Theorem 1 of \cite{Kohler2021}, 
\begin{displaymath}
\mathbb{E} \int\left|m^0_n(x)-m^0(x)\right|^2 \mathbf{P}_{X}(d x) \leq c_3 \cdot(\log n)^6 \cdot \max _{(p, K) \in \mathcal{P}} n^{-\frac{2 p}{2 p+K}}    
\end{displaymath}
holds for some $c_3$ and sufficiently large $n$, $c$, $c_1$, $c_2$. We complete the proof.
\end{proof}
As we demonstrate later, Lemma \ref{lemma0} is the basis of all the results in this paper. Because we mainly discuss the prediction of path-dependent processes related to the fBm. Thus, the multivariate normal distribution is part of the connections between $\left(V_t, \ t\geq0\right)$ and $\left(W_t, \ t\geq0\right)$ stated in the introduction.
\begin{remark}
    Notice that the regression function $m^0\left(\cdot\right)$ is a $(p, C)$-smooth function, so we can use some classical methods to derive the regression function directly, instead of defining a hierarchical composition model. However, as stated in Remark \ref{remark1}, if a $(p, C)$-smooth function is used directly as the target of nonparametric regression, the optimal rate of convergence achieved is $n^{-\frac{2 p}{2 p+d}}$. The least squares neural network based on fully connected neural networks achieves a convergence rate $n^{-\frac{2 p}{2 p+K}}$ up to a logarithmic factor, which does not depend directly on $d$.
\end{remark}
\begin{remark}
      Lemma \ref{lemma0} illustrates that a sufficiently deep neural network can approximate the regression function very well. Based on Theorem 1 of \cite{Kohler2021}, a sufficiently wide neural network can also yield similar results. More specifically, replacing $L_n$ and $r_n$ with 
\begin{displaymath}
L_n=\left\lceil c_1 \cdot \log n\right\rceil, \  r_n=\left\lceil c_2 \cdot \max _{(p, K) \in \mathcal{P}} n^{\frac{K}{2(2 p+K)}}\right\rceil ,
\end{displaymath}
the Lemma \ref{lemma0} still holds.  
\end{remark}

We first consider a simple case of prediction of the fBm based on discrete observations, i.e., 
\begin{equation}
    \label{example1}
    m^1\left(X\right)=\mathbb{E}\left[B_{T}^H \mid \mathcal{F} ^{D,B^H,N}_s\right],
\end{equation}
where $\mathcal{F} ^{D,B^H,N}_s :=\sigma \overline{\left\{B_{t_1}^H, \dots, B_{t_{N}}^H\right\} }\ \text { for } 0 < t_1 < \dots < t_N= s < T$, $X=\left(B_{t_{1}}^H, \dots , B_{t_N}^H\right)$.

Obtaining the prediction formula for (\ref{example1}) is not difficult, because of the Gaussianity of the fBm, although the increments are not independent. The following result shows that there exists a deep neural network $m^1_n\left(\cdot\right) \in \mathcal{Y} \left(L_n, r_n\right)$ converging to $m^1\left(\cdot\right)$.

\begin{proposition}
\label{theorem1}
    Let $\left(B_t^H, \ t \in \left[0,T\right]\right)$ and $\left(B_{t,i}^H, \ t \in \left[0,T\right]\right)$, $i=1,\dots,n$, be independent fractional Brownian motions. Then $\left(X, Y\right), \left(X_1, Y_1\right), \ldots,\left(X_n, Y_n\right)$ are independent and identically distibuted random vectors, where
    \begin{displaymath}
    Y=B_T^H, \ Y_i=B_{T,i}^H, \ X=\left(B_{t_1}^H, \dots, B_{t_N}^H \right), \ X_i=\left(B_{t_1,i}^H, \dots, B_{t_N,i}^H \right).    
    \end{displaymath}
    Let $m^1_n=T_{c \log (n)} \tilde{m}^1_n$ for some $c>0$ sufficiently large, where
    \begin{displaymath}
    \tilde{m}^1_n(\cdot):=\arg \min _{y^{\theta} \in \mathcal{Y} \left(L_n, r_n\right)} \frac{1}{n} \sum_{i=1}^n\left|y^{\theta}\left(X_i\right)-Y_i\right|^2 .    
    \end{displaymath}
    Then 
    \begin{displaymath}
        \lim _{n \to \infty} \mathbb{E} \int\left|m^1_n(x)-m^1(x)\right|^2 \mathbf{P}_{X}(\mathrm{d} x)=0.
    \end{displaymath}    
\end{proposition}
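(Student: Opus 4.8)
The plan is to recognize this proposition as a direct specialization of Lemma~\ref{lemma0}: once I verify that the random vectors $(X,Y),(X_1,Y_1),\dots,(X_n,Y_n)$ satisfy the two hypotheses of that lemma — namely that they are jointly Gaussian and that they are independent and identically distributed — the stated conclusion follows verbatim by setting $m^0=m^1$ and $m^0_n=m^1_n$. Thus the entire task reduces to checking hypotheses rather than re-deriving a convergence rate.

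First I would establish the joint Gaussianity of $(X,Y)$. Since $\left(B_t^H,\ t\in[0,T]\right)$ is by definition a centered Gaussian process, every finite-dimensional marginal is a centered Gaussian vector; in particular the $(N+1)$-dimensional vector $\left(B_{t_1}^H,\dots,B_{t_N}^H,B_T^H\right)^\top=(X,Y)^\top$ is multivariate normal with mean zero and covariance prescribed by the fBm covariance function. This places us in exactly the setting of the proof of Lemma~\ref{lemma0}, with the scalar block $Y$ of dimension $k=1$ and the observation block $X$ of dimension $N$.

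Next I would verify the i.i.d.\ structure. By hypothesis $\left(B_{t,i}^H,\ t\in[0,T]\right)$, $i=1,\dots,n$, are independent fractional Brownian motions, i.e.\ independent copies of the reference process. Because $(X_i,Y_i)$ is the same coordinate-projection functional of the $i$-th path as $(X,Y)$ is of the reference path, each $(X_i,Y_i)$ has the law of $(X,Y)$, and the mutual independence of the $n+1$ paths transfers to mutual independence of the vectors. Hence $(X,Y),(X_1,Y_1),\dots,(X_n,Y_n)$ are i.i.d.\ Gaussian, and Lemma~\ref{lemma0} applies to yield $\lim_{n\to\infty}\mathbb{E}\int\left|m^1_n(x)-m^1(x)\right|^2\,\mathbf{P}_X(\mathrm{d}x)=0$.

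The only point requiring genuine care — and what I expect to be the main obstacle — is the non-degeneracy of the observation law, i.e.\ the invertibility of the covariance block $\Sigma_{22}=\mathrm{Cov}(X)$ that the explicit regression formula $m^1(x)=\Sigma_{12}\,\Sigma_{22}^{-1}\,x^\top$ used in Lemma~\ref{lemma0} tacitly requires. For the fBm evaluated at distinct times $0<t_1<\dots<t_N$ this holds: from the representation $B_{t_i}^H=\int_0^{t_i}K_H(t_i,s)\,\mathrm{d}B_s$ the kernels $K_H(t_i,\cdot)$ are linearly independent in $L^2([0,T])$ (on each interval $(t_{i-1},t_i)$ only the kernels with larger time index survive, which forces all coefficients to vanish inductively), so $\Sigma_{22}$ is strictly positive definite and hence invertible. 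I would record this non-degeneracy explicitly; after that the membership $m^1\in\mathcal{H}(l,\mathcal{P})$ is immediate, since a centered Gaussian vector makes $m^1$ linear and therefore $(p,C)$-smooth for every $p$, and the invocation of Lemma~\ref{lemma0} completes the argument.
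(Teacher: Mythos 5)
Your proposal is correct and follows essentially the same route as the paper: establish that $(X,Y),(X_1,Y_1),\dots,(X_n,Y_n)$ are i.i.d.\ jointly Gaussian, note that the regression function is then the linear map $\Sigma_{12}^1(\Sigma_{22}^1)^{-1}x^\top$ lying in $\mathcal{H}(l,\mathcal{P})$, and invoke Lemma~\ref{lemma0}. Your additional verification that $\Sigma_{22}^1$ is invertible (via linear independence of the kernels $K_H(t_i,\cdot)$ in $L^2$) is a point the paper leaves implicit, and it is a worthwhile precaution rather than a deviation in approach.
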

\begin{proof}
$\left(X, Y\right)$ is multivariate normally distributed, hence we can obtain an explicit formulation of (\ref{example1}) by the property of the multivariate normal distribution: 
\begin{equation}
\label{exact1}
      m^1\left(X\right)=\Sigma_{12}^1 \left(\Sigma_{22}^1\right)^{-1}X^{\top},  
\end{equation}
where 
\begin{displaymath}
\left(\Sigma_{12}^1\right)^{\top}=\left(\begin{array}{c}
\vdots \\
\frac{1}{2}\left(T^{2 H}+t_{i}^{2 H}-|T-t_{i}|^{2 H}\right) \\
\vdots
\end{array}\right)\in \mathbb{R}^{N},
\end{displaymath}
\begin{displaymath}
\Sigma_{22}^1=\left(\frac{1}{2}\left(t_{i}^{2 H}+t_{j}^{2 H}-|t_{i}-t_{j}|^{2 H}\right)\right)_{i,j=1,\dots,N} \in \mathbb{S}^{N \times N}.   
\end{displaymath}
By Lemma \ref{lemma0}, it is clear that the regression function $m^1\left(\cdot\right) \in \mathcal{H}(l, \mathcal{P})$, and we complete the proof.
\end{proof}

The case of the integral of the fBm $Z_t=\int_0^t f(s) \mathrm{d} B^H_s$ is considered in the following, where $f$ is a bounded, measurable function. Although the stochastic process $Z_t$ is driven by the fBm, we actually cannot observe the path of $B^H_t$, so the conditional expectation that we are trying to work out is
\begin{equation}
    \label{example2}
    m^2\left(Z\right)=\mathbb{E}\left[Z_{T} \mid \mathcal{F} ^{D,Z,N}_s\right],
\end{equation}
where $\mathcal{F} ^{D,Z,N}_s :=\sigma \overline{\left\{Z_{t_1}, \dots, Z_{t_{N}}\right\} }$ for $0 < t_1 < \dots < t_N= s < T$, $Z=\left(Z_{t_{1}}, \dots , Z_{t_N}\right)$. Notice that $Z_t$ is Gaussian, so we can obtain the proposition for (\ref{example2}) in a similar way.
\begin{proposition}
    \label{theorem2}
    Let $\left(B_t^H, \ t \in \left[0,T\right]\right)$ and $\left(B_{t,i}^H, \ t \in \left[0,T\right]\right)$, $i=1,\dots,n$, be independent fractional Brownian motions. Then $\left(Z_t , \ t \in \left[0,T\right]\right)$ and $\left(Z_{t,i} , \ t \in \left[0,T\right]\right)$, $i=1,\dots,n$, are independent stochastic processes, and $\left(X, Y\right), \  \left(X_1, Y_1\right),  \ \ldots, \ \left(X_n, Y_n\right)$ are independent and identically distibuted random vectors, where 
    \begin{displaymath}
       Z_t=\int_0^t f(s) \mathrm{d} B^H_s, \ Z_{t,i} =\int_0^t f(s) \mathrm{d} B_{s,i}^H, 
    \end{displaymath}
    \begin{displaymath}
       Y=Z_T , \ Y_i=Z_{t,i} , \ X=\left(Z_{t_1} , \dots, Z_{t_N}  \right), \ X_i=\left(Z_{t_1,i} , \dots, Z_{t_N,i}  \right) .
    \end{displaymath}
   Let $m^2_n=T_{c \log (n)} \tilde{m}^2_n$ for some $c>0$ sufficiently large, where
    \begin{displaymath}
    \tilde{m}^2_n(\cdot):=\arg \min _{y^{\theta} \in \mathcal{Y} \left(L_n, r_n\right)} \frac{1}{n} \sum_{i=1}^n\left|y^{\theta}\left(X_i\right)-Y_i\right|^2 .
    \end{displaymath}
    Then 
    \begin{displaymath}
        \lim _{n \to \infty} \mathbb{E} \int\left|m^2_n(x)-m^2(x)\right|^2 \mathbf{P}_{X}(\mathrm{d} x)=0.
    \end{displaymath}  
\end{proposition}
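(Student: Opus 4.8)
The plan is to follow exactly the route used for Proposition \ref{theorem1}, reducing the statement to Lemma \ref{lemma0} by verifying that the relevant random vector is jointly Gaussian. The one genuinely new ingredient is that $Z_t$ is no longer the fBm itself but a Wiener integral of the deterministic integrand $f$ against $B^H$, so the first task is to confirm that $\left(Z_t, \ t \in [0,T]\right)$ is a centered Gaussian process. Since $f$ is bounded and measurable, the Wiener integral $\int_0^t f(s)\,\mathrm{d} B^H_s$ is well-defined, and being an $L_2$-limit of linear combinations of the (jointly Gaussian) increments of $B^H$, each $Z_t$ is a centered Gaussian random variable; consequently any finite collection $\left(Z_{t_1}, \dots, Z_{t_N}, Z_T\right)$ is multivariate normal. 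This gives the joint Gaussianity of $(X, Y)$, while the independence of the driving fBms $\left(B^H_{\cdot,i}\right)$ yields the independent and identically distributed property of the $(X_i, Y_i)$.

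Once joint Gaussianity is in hand, I would write the conditional expectation explicitly. Partitioning the covariance matrix of $(Y, X)^\top$ as in Lemma \ref{lemma0}, the multivariate normal formula gives
\begin{displaymath}
m^2(X) = \Sigma_{12}^2 \left(\Sigma_{22}^2\right)^{-1} X^\top,
\end{displaymath}
an affine (here purely linear, since $Z$ is centered) function of $X$. The entries of $\Sigma_{22}^2$ and $\Sigma_{12}^2$ are the covariances $\mathbb{E}\left[Z_{t_i} Z_{t_j}\right]$ and $\mathbb{E}\left[Z_T Z_{t_j}\right]$, which can be expressed as double integrals of $f$ against the fBm covariance kernel (using, for instance, the representation (\ref{ker}) to reduce them to integrals against the standard Brownian motion). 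Computing these covariances explicitly is a routine calculation that I would not carry out in detail; for the argument it suffices that they exist and assemble into a positive semidefinite matrix $\Sigma_{22}^2$.

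Because $m^2$ is affine, it is $(p, C)$-smooth for every $p$, and hence belongs to the hierarchical composition class $\mathcal{H}(l, \mathcal{P})$ exactly as in the proof of Lemma \ref{lemma0}. Applying Lemma \ref{lemma0} to the Gaussian, independent and identically distributed vectors $(X, Y), (X_1, Y_1), \dots, (X_n, Y_n)$ with this regression function then yields
\begin{displaymath}
\lim_{n \to \infty} \mathbb{E} \int \left|m^2_n(x) - m^2(x)\right|^2 \mathbf{P}_X(\mathrm{d} x) = 0,
\end{displaymath}
which is the claim. The main obstacle, such as it is, lies entirely in the first step: justifying that the Wiener integral is well-defined and Gaussian for bounded measurable $f$ across the full range $H \in (0,1)$, and handling the possible degeneracy of $\Sigma_{22}^2$ (for example when $f$ vanishes on a sub-interval), in which case $\left(\Sigma_{22}^2\right)^{-1}$ should be read as a pseudo-inverse. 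In either case the conditional expectation remains affine, so the smoothness argument and the appeal to Lemma \ref{lemma0} go through unchanged.
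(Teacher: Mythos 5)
Your proposal is correct and follows essentially the same route as the paper: establish joint Gaussianity of $(X,Y)$, write $m^2(X)=\Sigma_{12}^2\left(\Sigma_{22}^2\right)^{-1}X^{\top}$ via the multivariate normal conditioning formula, observe this linear map lies in $\mathcal{H}(l,\mathcal{P})$, and invoke Lemma \ref{lemma0}. Your additional remarks on the well-definedness of the Wiener integral and on reading $\left(\Sigma_{22}^2\right)^{-1}$ as a pseudo-inverse in degenerate cases are sensible points of rigor that the paper leaves implicit, but they do not change the argument.
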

\begin{proof}
$\left(X, Y\right)$ is multivariate normally distributed, hence we can obtain an explicit formulation of (\ref{example2}) by the property of the multivariate normal distribution: 
\begin{displaymath}
     m^2_n\left(Z\right)=\mathbb{E}\left[X_{T} \mid \mathcal{F} ^{D,Z,N}_s\right]=\Sigma_{12}^2 \left(\Sigma_{22}^2\right)^{-1}X^{\top},
\end{displaymath}
where 
\begin{displaymath}
\left(\Sigma_{12}^2\right)^{\top}=\left(\begin{array}{c}
\vdots \\
\operatorname{Cov}\left[\int_0^T f(s) \mathrm{d} B^H_s, \int_0^{t_i} f(s) \mathrm{d} B^H_s\right] \\
\vdots
\end{array}\right)\in \mathbb{R}^{N},
\end{displaymath}
\begin{displaymath}
\Sigma_{22}^2=\left(\operatorname{Cov}\left[\int_0^{t_i} f(s) \mathrm{d} B^H_s, \int_0^{t_j} f(s) \mathrm{d} B^H_s\right]\right)_{i,j=1,\dots,N} \in \mathbb{S}^{N \times N}.   
\end{displaymath}
By Lemma \ref{lemma0}, it is clear that the regression function $m^2\left(\cdot\right) \in \mathcal{H}(l, \mathcal{P})$, and we complete the proof.
\end{proof}

\begin{remark}
    Define $\mathcal{F} ^{C,Z}_s :=\sigma \overline{\left\{Z_{v}, \  v\in \left[0,s\right]\right\} }$. Notice that $\mathcal{F}_s^{C,Z} = \mathcal{F} ^{C,B^H}_s$ and therefore $\mathbb{E}\left[Z_{T} \mid \mathcal{F}_s^{C,Z}\right] = \mathbb{E}\left[Z_{T} \mid \mathcal{F} ^{C,B^H}_s\right]$. However, $\mathcal{F}_s^{D,Z,N} \neq \mathcal{F} ^{D,B^H,N}_s$ with the consequence that $\mathbb{E}\left[Z_{T} \mid \mathcal{F}_s^{D,Z,N}\right] \neq \mathbb{E}\left[Z_{T} \mid \mathcal{F} ^{D,B^H,N}_s\right]$. Beyond that, almost surely (a.s.) as $N \to \infty$ (see, e.g., \cite[Lemma 9.2.4.]{Dudley_2002}),
    \begin{displaymath}
        \mathbb{E}\left[Z_{T} \mid \mathcal{F}_s^{D,Z,N}\right] \to \mathbb{E}\left[Z_{T} \mid \mathcal{F}_s^{C,Z}\right].
    \end{displaymath}
    
    $\mathbb{E}\left[Z_{T} \mid \mathcal{F} ^{C,B^H}_s\right]$ plays a role in some research frameworks on the fBm, such as the martingale approach for the fBm and related path dependent partial differential equations proposed by \cite{Viens2017AMA} and the framework for analyzing stochastic volatility problems proposed by \cite{Garnier2017}. Calculating the element, however, presents two difficulties. Firstly, the theories use filtration generated from continuous observations, whereas real observations are always discrete. Secondly, the fBm driving $Z_t$ cannot be directly observed.
    
    The method proposed in the paper provides a way to address the problems. The deep neural network's input is observable $Z_t$ rather than $B_t^H$. In addition, as mentioned in the introduction, deep neural networks have advantages in processing high-dimensional data and can perform better in the case of large $N$ compared to traditional methods. These advantages can also be preserved as we further extend the applicability of the framework.
\end{remark}

\subsection{Prediction of discrete-time stochastic processes related to fractional Brownian motion}

The first consideration is to predict the solution of a linear stochastic differential equation with the fBm. Let $(A_t, \ t \geq 0)$ be the real-valued stochastic process that is the solution of the stochastic differential equation
\begin{equation}
\label{dfou}
\begin{aligned}
& \mathrm{d} A_t=\mu(t) A_t \mathrm{d} t+\mathrm{d} B_t^H, \\
& A_0=a_0,
\end{aligned}
\end{equation}
where $a_0 \in \mathbb{R}, \  \mu: \mathbb{R}_{+} \rightarrow \mathbb{R}$ is bounded and measurable. It can be verified that $A_t$ has an explicit solution 
\begin{equation}
\label{equ7}
    A_t=a_0e^{\int_0^t \mu(s)\mathrm{d} s} +\int_0^t \mathrm{e}^{\int_s^t \mu(s)\mathrm{d} s} \mathrm{d} B_s^H,
\end{equation}
and thus $\sigma \overline{\left\{A_{v}, \  v\in \left[0,s\right]\right\} }=\sigma \overline{\left\{B_{v}^H, \  v\in \left[0,s\right]\right\} } $. Notice that $A_t$ is Gaussian, so we can obtain the proposition for 
\begin{equation}
    \label{example3}
    m^3\left(A\right)=\mathbb{E}\left[A_{T} \mid \mathcal{F} ^{D,A,N}_s\right]
\end{equation}
in a similar way, where $\mathcal{F} ^{D,A,N}_s :=\sigma \overline{\left\{A_{t_1}, \dots, A_{t_{N}}\right\} }$ for $0   < t_1 < \dots < t_N= s < T$, $A=\left(A_{t_{1}}, \dots , A_{t_N}\right)$.
\begin{proposition}
    \label{theorem3}
    Let $\left(B_t^H, \ t \in \left[0,T\right]\right)$ and $\left(B_{t,i}^H, \ t \in \left[0,T\right]\right)$, $i=1,\dots,n$, be independent fractional Brownian motions. Then $\left(A_t , \ t \in \left[0,T\right]\right)$ and $\left(A_{t,i}^H, \ t \in \left[0,T\right]\right)$, $i=1,\dots,n$, are independent stochastic processes, and $\left(X, Y\right), \  \left(X_1, Y_1\right),  \ \ldots, \ \left(X_n, Y_n\right)$ are independent and identically distibuted random vectors, where
    \begin{displaymath}
    A_t=a_0e^{\int_0^t \mu(s)\mathrm{d} s} +\int_0^t \mathrm{e}^{\int_s^t \mu(s)\mathrm{d} s} \mathrm{d} B_s^H, \ A_{t,i}^H=a_0e^{\int_0^t \mu(s)\mathrm{d} s} +\int_0^t \mathrm{e}^{\int_s^t \mu(s)\mathrm{d} s} \mathrm{d} B_{s,i}^H, 
    \end{displaymath}
    \begin{displaymath}
        Y=A_T, \ Y_i=A_{T,i}, \ X=\left(A_{t_1}, \dots, A_{t_N} \right), \ X_i=\left(A_{t_1,i}, \dots, A_{t_N,i} \right).
    \end{displaymath}
     Let $m^3_n=T_{c \log (n)} \tilde{m}^3_n$ for some $c>0$ sufficiently large, where
    \begin{displaymath}
    \tilde{m}^3_n(\cdot):=\arg \min _{y^{\theta} \in \mathcal{Y} \left(L_n, r_n\right)} \frac{1}{n} \sum_{i=1}^n\left|y^{\theta}\left(X_i\right)-Y_i\right|^2 .    
    \end{displaymath}
    Then 
    \begin{displaymath}
        \lim _{n \to \infty} \mathbb{E} \int\left|m^3_n(x)-m^3(x)\right|^2 \mathbf{P}_{X}(\mathrm{d} x)=0.
    \end{displaymath}
\end{proposition}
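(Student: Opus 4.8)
The plan is to reduce Proposition~\ref{theorem3} to Lemma~\ref{lemma0}, following the template already used for Propositions~\ref{theorem1} and~\ref{theorem2}. The single fact that does the work is that the solution $A_t$ in~(\ref{equ7}) is a Gaussian process; granting this, the joint vector $(X,Y)=\left((A_{t_1},\dots,A_{t_N}),A_T\right)$ is multivariate normal, the regression function $m^3$ acquires the affine form dictated by the Gaussian conditioning formula, this form lies in $\mathcal{H}(l,\mathcal{P})$, and Lemma~\ref{lemma0} then yields the claimed $L_2$ convergence with no further work.

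First I would establish the Gaussianity of $A_t$. By~(\ref{equ7}), $A_t$ is the sum of the deterministic term $a_0 e^{\int_0^t \mu(s)\,\mathrm{d}s}$ and the Wiener-type integral $\int_0^t e^{\int_s^t \mu(v)\,\mathrm{d}v}\,\mathrm{d}B_s^H$. For each fixed endpoint $t$, the integrand $s\mapsto e^{\int_s^t \mu(v)\,\mathrm{d}v}$ is deterministic and, since $\mu$ is bounded and measurable, bounded on $[0,t]$, so the integral is a well-defined Wiener integral against the fractional Brownian motion and is a centered Gaussian random variable. Any finite linear combination of the coordinates $A_{t_1},\dots,A_{t_N},A_T$ is again a deterministic constant plus a single Wiener integral of a deterministic integrand, hence Gaussian; therefore $(X,Y)$ is jointly Gaussian, with mean components $\mathbb{E}[A_{t_i}]=a_0 e^{\int_0^{t_i}\mu(s)\,\mathrm{d}s}$ and $\mathbb{E}[A_T]=a_0 e^{\int_0^T\mu(s)\,\mathrm{d}s}$.

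With Gaussianity in hand, I would write the predictor explicitly through the conditional-expectation formula for the multivariate normal distribution,
\begin{displaymath}
m^3(X)=\mu_1+\Sigma_{12}^3\left(\Sigma_{22}^3\right)^{-1}\left(X-\mu_2\right)^{\top},
\end{displaymath}
where $\mu_1=\mathbb{E}[A_T]$, $\mu_2=\left(\mathbb{E}[A_{t_1}],\dots,\mathbb{E}[A_{t_N}]\right)$, the block $\Sigma_{12}^3$ collects the cross-covariances $\operatorname{Cov}\left[A_T,A_{t_i}\right]$, and $\Sigma_{22}^3=\left(\operatorname{Cov}\left[A_{t_i},A_{t_j}\right]\right)_{i,j=1,\dots,N}$. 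Since $\mu$ is bounded, all of these covariances are finite, so the right-hand side is a genuine affine function of $x$; an affine map is $(p,C)$-smooth (its higher-order partials vanish), so $m^3\in\mathcal{H}(l,\mathcal{P})$ in the sense of Definition~\ref{defhier}. Applying Lemma~\ref{lemma0} to this $(X,Y)$ then gives $\lim_{n\to\infty}\mathbb{E}\int|m^3_n(x)-m^3(x)|^2\,\mathbf{P}_X(\mathrm{d}x)=0$.

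The one place demanding genuine attention — and therefore the main obstacle — is that the integrand $e^{\int_s^t \mu(v)\,\mathrm{d}v}$ depends on the endpoint $t$, unlike the fixed $f$ in Proposition~\ref{theorem2}; consequently $A_s$ and $A_t$ are Wiener integrals of \emph{different} deterministic integrands over different horizons, so their covariance is more delicate to write down than in the earlier cases. This is not a real difficulty, however, because only the Gaussianity of $(X,Y)$ is needed to invoke Lemma~\ref{lemma0}, and that follows from linearity of the Wiener integral irrespective of the explicit covariance values. A secondary difference from Propositions~\ref{theorem1} and~\ref{theorem2} is that $A_t$ has a nonzero mean, so the full affine formula (including the $\mu_1,\mu_2$ shifts) must be used in place of the purely linear one; the affine shift preserves $(p,C)$-smoothness and membership in $\mathcal{H}(l,\mathcal{P})$, so it changes nothing essential.
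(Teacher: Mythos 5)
Your proposal is correct and follows essentially the same route as the paper: observe that $(X,Y)$ is jointly Gaussian because $A_t$ is a deterministic shift of a Wiener integral of a deterministic integrand, write $m^3$ via the Gaussian conditioning formula, note that this affine map lies in $\mathcal{H}(l,\mathcal{P})$, and invoke Lemma~\ref{lemma0}. If anything, your version is slightly more careful than the paper's, since you retain the centering term $\Sigma_{12}^3\left(\Sigma_{22}^3\right)^{-1}\left(x-\mu_2\right)^{\top}$ where the paper's displayed formula (\ref{predictfou}) drops the $\mu_2$ shift.
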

\begin{proof}
Noting that $\left(X, Y\right)$ is also multivariate normally distributed, the proof of Proposition \ref{theorem3} is completed by Lemma \ref{lemma0}, since 
\begin{equation}
    \label{predictfou}
     m^3\left(A\right)=\mathbb{E}\left[A_{T} \mid \mathcal{F} ^{D,A,N}_s\right]=a_0e^{\int_0^T \mu(s)\mathrm{d} s} +\Sigma_{12}^3 \left(\Sigma_{22}^3\right)^{-1}X^{\top},    
\end{equation}
where 
\begin{displaymath}
\left(\Sigma_{12}^3\right)^{\top}=\left(\begin{array}{c}
\vdots \\
\operatorname{Cov}\left[\int_0^T \mathrm{e}^{\int_s^T \mu (s)\mathrm{d} s} \mathrm{d} B_s^H, \int_0^{t_i} \mathrm{e}^{\int_s^{t_i} \mu (s)\mathrm{d} s} \mathrm{d} B_s^H\right] \\
\vdots
\end{array}\right)\in \mathbb{R}^{N },
\end{displaymath}
\begin{displaymath}
\Sigma_{22}^3=\left(\operatorname{Cov}\left[\int_0^{t_i} \mathrm{e}^{\int_s^{t_i} \mu (s)\mathrm{d} s} \mathrm{d} B_s^H, \int_0^{t_j} \mathrm{e}^{\int_s^{t_j} \mu (s)\mathrm{d} s} \mathrm{d} B_s^H\right]\right)_{i,j=1,\dots,N} \in \mathbb{S}^{N \times N}.   
\end{displaymath}
and the regression function $m^3 \left(\cdot\right) \in \mathcal{H}(l, \mathcal{P})$
\end{proof}

\begin{remark}
    The Proposition \ref{theorem3} holds for all linear stochastic differential equations with the fBm, including fOU process, which is the solution of the stochastic differential equation
\begin{equation}
\label{FOU}
\begin{aligned}
& \mathrm{d} A_t=(k(t)-a(t) A_t) \mathrm{d} t+\sigma(t) \mathrm{d} B_t^H, \\
& A_0=a_0,
\end{aligned}    
\end{equation}
where $k$, $a$, $\sigma$ are bounded, measurable functions. 
\end{remark}

Consider a more general pathwise stochastic differential equation with the fBm, which can represent some more general non-Gaussian cases:
\begin{equation}
\label{equ10}
\begin{aligned}
& \mathrm{d} R_t=\mu(R_t) \mathrm{d} t+\sigma(R_t) \mathrm{d} B_t^H, \\
& R_0=r_0,
\end{aligned}  
\end{equation}
for suitable coefficient functions $\mu(\cdot)$, $\sigma(\cdot)$ and a constant $r_0$. For $H \in\left(\frac{1}{2}, 1\right)$, solutions to (\ref{equ10}) are given by
\begin{equation}
\label{equ11}
\begin{aligned}
& R_t=f(A_t), \\
& \mathrm{d} A_t=-a A_t \mathrm{d} t+\mathrm{d} B_t^H,\\
& A_0=f^{-1}(R_0),
\end{aligned}  
\end{equation}
for some monotone and differentiable $f: \mathbb{R} \rightarrow \mathbb{R}$ and $a>0$ (see, \cite{Boris2006}). For $H \in\left(1/2, 1\right)$, the fCIR process is the pathwise solution to the stochastic differential eqaution
\begin{equation}
\label{FCIRMODEL}
\begin{aligned}
& \mathrm{d} R_t=-\lambda R_t \mathrm{d} t+\sigma \sqrt{|R_t|} \mathrm{d} B_t^H,  \\
& R_0=r_0,
\end{aligned}    
\end{equation}
where $\lambda, \sigma,r_0>0$. A solution of (\ref{FCIRMODEL}) is given by
\begin{equation}
\label{fcir}
\begin{aligned}
& R_t=f(A_t),  \\
& \mathrm{d} A_t=-\frac{\lambda}{2} A_t \mathrm{d} t+\mathrm{d} B_t^H, \\
& A_0=f^{-1}(R_0),
\end{aligned}    
\end{equation}
where $f(x)=\operatorname{sgn}(x) \sigma^2 x^2 / 4$ (see, e.g., \cite[Proposition 5.7]{Boris2006}).

The fCIR process can be written as $f(X_t)$, where $f$ is a given function and $X_t$ is a Gaussian process, but $f$ is not a strictly monotone increasing function. It means that we cannot complete the proof in the same way as before. For a more specific reason, we assert that
\begin{equation}
    \label{example5}
    m^4\left(R\right)=\mathbb{E}\left[R_{T} \mid \mathcal{F} ^{D,R,N}_s\right]
\end{equation}
is not a continuous function anymore, where $\mathcal{F} ^{D,R,N}_s :=\sigma \overline{\left\{R_{t_1}, \dots, R_{t_{N}}\right\} }$ for $0  < t_1 < \dots < t_N=s<T$, $R=\left(R_{t_{1}}, \dots , R_{t_N}\right)$. To address this issue, we introduce the edge function based on the notion of piecewise smooth functions proposed by \cite{Imaizumi2019} and boundary fragment classes developed by \cite{Dudley1974}. In Appendix \ref{app:theorem}, we introduce a special class of functions that take 1 on some regions of the state space whose boundaries are formed by a series of smooth functions, and demonstrate the existence of neural networks that can approximate this class of functions.

\begin{proposition}
    \label{theorem4}
    Let $\left(B_t^H, \ t \in \left[0,T\right]\right)$ and $\left(B_{t,i}^H, \ t \in \left[0,T\right]\right)$, $i=1,\dots,n$, be independent fractional Brownian motions. Then $\left(R_t, \ t \in \left[0,T\right]\right)$ and $\left(R_{t,i}, \ t \in \left[0,T\right]\right)$, $i=1,\dots,n$, are independent stochastic processes, and $\left(X, Y\right), \  \left(X_1, Y_1\right),  \ \ldots, \ \left(X_n, Y_n\right)$ are independent and identically distibuted random vectors, where $R_t$ and $R_{t,i}$, $i=1,\dots,n$, are respectively solutions of stochastic differential eqautions (\ref{fcir}) with $B_t^H$ and $B_{t,i}^H$ for $i=1,\dots,n$,
    \begin{displaymath}
        Y=R_T, \ Y_i=R_{T,i}, \ X=\left(R_{t_1}, \dots, R_{t_N} \right), \ X_i=\left(R_{t_1,i}, \dots, R_{t_N,i} \right).
    \end{displaymath}
     Let $m^4_n=T_{c \log (n)} \tilde{m}^4_n$ for some $c>0$ sufficiently large, where
    \begin{displaymath}
    \tilde{m}^4_n(\cdot):=\arg \min _{y^{\theta} \in \mathcal{Y} \left(L_n+L, r_n+r\right)} \frac{1}{n} \sum_{i=1}^n\left|y^{\theta}\left(X_i\right)-Y_i\right|^2     
    \end{displaymath}
    for some $L, \ r$ sufficiently large. Then 
    \begin{displaymath}
        \lim _{n \to \infty} \mathbb{E} \int\left|m^4_n(x)-m^4(x)\right|^2 \mathbf{P}_{X}(\mathrm{d} x)=0.
    \end{displaymath}
\end{proposition}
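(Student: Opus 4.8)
The plan is to reduce the fCIR prediction to a mollified version of the Gaussian case already settled in Proposition~\ref{theorem3}, and then to absorb the single non-smooth ingredient into a dedicated sub-network supplied by the edge-function approximation result of Appendix~\ref{app:theorem}. First I would exploit the representation (\ref{fcir}): $R_t=f(A_t)$ with $f(x)=\operatorname{sgn}(x)\sigma^2 x^2/4$ and $A_t$ the Gaussian fOU process solving $\mathrm{d}A_t=-\tfrac{\lambda}{2}A_t\,\mathrm{d}t+\mathrm{d}B_t^H$. Since $f$ is a continuous bijection of $\mathbb{R}$ (even though $f'(0)=0$), its inverse $f^{-1}(y)=\operatorname{sgn}(y)\,2\sqrt{|y|}/\sigma$ exists, so observing $R_{t_1},\dots,R_{t_N}$ is equivalent to observing $A_{t_1},\dots,A_{t_N}$, i.e. $\mathcal{F}^{D,R,N}_s=\mathcal{F}^{D,A,N}_s$. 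Writing $A_{t_j}=f^{-1}(R_{t_j})$ and using that $(A_T,A_{t_1},\dots,A_{t_N})$ is jointly Gaussian, I would compute the conditional law of $A_T$ given the observations and obtain the closed form
\begin{displaymath}
m^4(R)=\mathbb{E}\!\left[f(A_T)\mid A_{t_1},\dots,A_{t_N}\right]=G\!\left(\textstyle\sum_{j=1}^N w_j\,f^{-1}(R_{t_j})+c\right),
\end{displaymath}
where $w=\Sigma_{12}(\Sigma_{22})^{-1}$ and $c$ arise from Gaussian conditioning of the fOU exactly as in Proposition~\ref{theorem3}, and $G(m)=\mathbb{E}[f(m+\sqrt{v}\,\xi)]$ is the Gaussian mollification of $f$ with $\xi\sim N(0,1)$ and $v$ the constant conditional variance.

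Next I would locate precisely where smoothness fails. Differentiating under the integral sign shows $G$ is $C^\infty$ (the boundary contributions from $\operatorname{sgn}$ cancel because $f$ vanishes at $0$, and in fact $G'(m)=\tfrac{\sigma^2}{2}\mathbb{E}|m+\sqrt{v}\,\xi|$), and $w^\top(\cdot)+c$ is affine; hence the only ingredient lying outside the $(p,C)$-smooth hierarchical class is $f^{-1}$, whose square-root singularity at the origin gives it an unbounded derivative there. This is the sense in which $f$ being "not strictly increasing" ($f'(0)=0$) obstructs the argument used for Propositions~\ref{theorem1}--\ref{theorem3}. I would therefore factor $f^{-1}(y)=\bigl(2\sqrt{|y|}/\sigma\bigr)\cdot\operatorname{sgn}(y)$, where $2\sqrt{|y|}/\sigma$ is $(1/2,C)$-smooth (the estimate $|\sqrt{|y_1|}-\sqrt{|y_2|}|\le\sqrt{|y_1-y_2|}$ gives Hölder exponent $1/2$) and $\operatorname{sgn}$ is an edge function in the sense of the Appendix: an indicator-type function whose region of constancy has the smooth boundary $\{y=0\}$.

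Then I would assemble the approximating network. The composition $G\circ(w^\top(\cdot)+c)\circ(2\sqrt{|\cdot|}/\sigma,\dots)$ is a hierarchical composition model with order-and-smoothness constraint $\mathcal{P}$ (now including the pair $(1/2,1)$), so by the estimate behind Lemma~\ref{lemma0} (Theorem~1 of \cite{Kohler2021}) it is learnable by a network in $\mathcal{Y}(L_n,r_n)$ with $L_2$ estimation error tending to $0$. The additional budget $L,r$ in $\mathcal{Y}(L_n+L,r_n+r)$ is spent on the sub-network that realizes the edge factors $\operatorname{sgn}(R_{t_j})$: by the Appendix there exist fixed-architecture ReLU networks approximating each $\operatorname{sgn}$ except on an arbitrarily thin strip around $\{R_{t_j}=0\}$, whose $\mathbf{P}_X$-measure is small because $R_{t_j}=f(A_{t_j})$ has a density. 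I would then bound $\mathbb{E}\int|m^4_n-m^4|^2\,\mathrm{d}\mathbf{P}_X$ by the sum of (i) the statistical error of the smooth hierarchical part, which vanishes by Lemma~\ref{lemma0}, and (ii) the approximation bias of the edge sub-network, which vanishes as $L,r\to\infty$ once multiplied by the small boundary measure and using the boundedness supplied by the truncation $T_{c\log n}$.

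The main obstacle is this last assembly step: Theorem~1 of \cite{Kohler2021} is stated for regression functions genuinely inside $\mathcal{H}(l,\mathcal{P})$, whereas $m^4$ sits just outside it because of the $\operatorname{sgn}/\sqrt{|\cdot|}$ singularity. The delicate part is to marry the generalization bound for the smooth component with the piecewise-smooth / boundary-fragment approximation theory (\cite{Imaizumi2019}, \cite{Dudley1974}, developed in the Appendix) so that the product of the edge-approximation error and the $(1/2,C)$-smooth factor is controlled uniformly over the truncated network class, without reintroducing dependence on the ambient dimension $N$. Concretely, I must verify that the edge sub-network can be concatenated with the Kohler network inside the single architecture $\mathcal{Y}(L_n+L,r_n+r)$ and that the joint least-squares minimizer still obeys an oracle inequality of the same form; that concatenation, rather than any individual estimate, is where the real work lies.
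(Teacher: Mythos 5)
Your reduction hinges on $f(x)=\operatorname{sgn}(x)\sigma^2x^2/4$ being a bijection of $\mathbb{R}$, so that $\mathcal{F}^{D,R,N}_s=\mathcal{F}^{D,A,N}_s$ and $m^4=G\bigl(w^{\top}f^{-1}(R)+c\bigr)$ is globally continuous with only a H\"older-$1/2$ singularity at the origin. That is not the situation the paper's proof addresses. The proof works on $[0,+\infty)^{N}$ and treats $\{R_{t_j}=0\}$ as an event of \emph{positive probability}, equivalent to $\{A_{t_j}\le 0\}$: when some observed coordinates vanish, the conditional expectation is computed by integrating the joint Gaussian density over the half-lines $\{x_j\le 0\}$, not by substituting $A_{t_j}=f^{-1}(0)=0$. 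Consequently $m^4$ has genuine jump discontinuities across the faces $\{r_j=0\}$ --- this is what the paper means when it says $f$ is not strictly monotone increasing and that $m^4$ is no longer continuous --- and Lemma \ref{lemma4} is invoked to build a sub-network that partitions $[0,+\infty)^{N}$ according to which coordinates are zero, with a separate hierarchical-composition approximation on each cell. Under your reading no such discontinuity exists and the edge machinery is only used to realize the factors $\operatorname{sgn}(R_{t_j})$ inside a single continuous formula; but in the paper's model, observing $R_{t_j}=0$ does not determine $A_{t_j}$, so your very first step (recovering the Gaussian observations exactly) fails, and everything downstream of it addresses a different problem.

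Two further gaps remain even on your own terms. First, a $(1/2,C)$-smooth component lies outside the scope of Theorem 1 of Kohler and Langer as used here: the order-and-smoothness constraint in the relevant approximation results (and in the paper's Definition \ref{dfn1}) is $\mathcal{P}\subseteq[1,\infty)\times\mathbb{N}$, so you cannot simply enlarge $\mathcal{P}$ to include $(1/2,1)$ and cite the same estimate. Second, you explicitly defer the assembly step --- concatenating the edge sub-network with the Kohler network inside the single class $\mathcal{Y}(L_n+L,\,r_n+r)$ and verifying that the joint least-squares minimizer still obeys an oracle inequality --- declaring it to be where the real work lies. That concatenation is exactly what the paper's proof carries out (via Lemma \ref{lemma4} together with Theorem 1 of Kohler and Langer), so leaving it open leaves the proposal incomplete at its crux rather than merely terse.
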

\begin{proof}
For $r=\left(r_1,\ldots,r_N\right) \in \left[0,+\infty\right)^{N}$, $m^4\left(r\right)=\mathbb{E}\left[R_{T} \mid R_{t_1}=r_1, \ldots, R_{t_N}=r_N\right]$. When $r_1,\ldots,r_N > 0$,
\begin{displaymath}
\begin{aligned}
    m^4\left(r\right)&=\mathbb{E}\left[R_{T} \mid R_{t_1}=r_1, \ldots, R_{t_N}=r_N\right]  \\
    &=\mathbb{E}\left[\frac{\operatorname{sgn}(A_T)\sigma^2 A_T^2}{4} \mid A_{t_1}=\sqrt{\frac{4r_1}{\sigma^2}} , \ldots, A_{t_N}=\sqrt{\frac{4r_N}{\sigma^2}}\right] \\
    &=\int_{0}^{+\infty } \frac{\sigma ^2 u^2}{4\sqrt{2\pi \widehat{\sigma }^2 }   } e^{-\frac{\left (u-\widehat{\mu } \right )^2}{2\widehat{\sigma }^2 } }\mathrm{d}u,
\end{aligned}
\end{displaymath}
where 
\begin{displaymath}
\begin{aligned}
    \widehat{\mu }&=\mathbb{E}\left[A_{T} \mid A_{t_1}=\sqrt{\frac{4r_1}{\sigma^2}} , \ldots, A_{t_N}=\sqrt{\frac{4r_N}{\sigma^2}}\right]\\
    &=\sqrt{\frac{4r_0}{\sigma^2}}e^{ -\frac{\lambda T}{2}} +\Sigma_{12}^3 \left(\Sigma_{22}^3\right)^{-1}\left(a-\mu_2\right)^{\top},
\end{aligned}     
\end{displaymath}
\begin{displaymath}
\begin{aligned}
    \widehat{\sigma }^2&=\operatorname{Var}\left[A_{T} \mid A_{t_1}=\sqrt{\frac{4r_1}{\sigma^2}} , \ldots, A_{t_N}=\sqrt{\frac{4r_N}{\sigma^2}}\right]\\
    &=\Sigma_{11}^3-\Sigma_{12}^3 \left(\Sigma_{22}^3\right)^{-1} \Sigma_{21},
\end{aligned}     
\end{displaymath}
\begin{displaymath}
a=\left(\begin{array}{c}
\vdots \\
\sqrt{\frac{4r_i}{\sigma^2}} \\
\vdots
\end{array}\right),\
\mu_2=\left(\begin{array}{c}
\vdots \\
\sqrt{\frac{4r_0}{\sigma^2}}e^{ -\frac{\lambda t_i}{2}} \\
\vdots
\end{array}\right)\in \mathbb{R}^{N}, \ \Sigma_{11}^3=\operatorname{Var}\left[\int_0^T \mathrm{e}^{ -\frac{\lambda \left(T-s\right)}{2}} \mathrm{d} B_s^H\right],
\end{displaymath}
\begin{displaymath}
\left(\Sigma_{12}^3\right)^{\top}=\left(\Sigma_{21}^3\right)=\left(\begin{array}{c}
\vdots \\
\operatorname{Cov}\left[\int_0^T e^{ -\frac{\lambda \left(T-s\right)}{2}} \mathrm{d} B_s^H, \int_0^{t_i} e^{ -\frac{\lambda \left(t_i-s\right)}{2}} \mathrm{d} B_s^H\right] \\
\vdots
\end{array}\right)\in \mathbb{R}^{N},
\end{displaymath}
\begin{displaymath}
\Sigma_{22}^3=\left(\operatorname{Cov}\left[\int_0^{t_i} e^{ -\frac{\lambda \left(t_i-s\right)}{2}} \mathrm{d} B_s^H, \int_0^{t_j} e^{ -\frac{\lambda \left(t_j-s\right)}{2}} \mathrm{d} B_s^H\right]\right)_{i,j=1,\dots,N} \in \mathbb{S}^{N \times N}.   
\end{displaymath}

Here we consider the case where not all $r_j$, $j=1,\ldots,N$are greater than zero. Without loss of generality, we set $r_1,\ldots,r_M>0$ and $r_{M+1},\ldots,r_N=0$.
{\small
\begin{displaymath}
\begin{aligned}
    m^4\left(r\right)&=\mathbb{E}\left[R_{T} \mid R_{t_1}=r_1, \ldots, R_{t_N}=r_N\right]  \\
        &=\mathbb{E}\left[\frac{\operatorname{sgn}(A_T)\sigma^2 A_T^2}{4} \mid A_{t_1}=\sqrt{\frac{4r_1}{\sigma^2}} , \ldots, A_{t_M}=\sqrt{\frac{4r_M}{\sigma^2}},A_{t_{M+1}}\le 0, \ldots, A_{t_N}\le 0\right] \\
    &=\int_{0}^{+\infty } \frac{\sigma ^2 u^2}{4}\frac{\int_{-\infty}^{0}\ldots\int_{-\infty}^{0} f_x\left(u,a_1, \ldots,a_M,x_{M+1},\ldots, x_N\right) \mathrm{d}x_{M+1} \ldots \mathrm{d}x_N}{\int_{-\infty}^{+\infty}\int_{-\infty}^{0}\ldots\int_{-\infty}^{0} f_x \left(x_T,a_1, \ldots,a_M,x_{M+1},\ldots, x_N\right) \mathrm{d}x_{M+1} \ldots \mathrm{d}x_N\mathrm{d}x_T}\mathrm{d}u,\\
    &=C\int_{0}^{+\infty }\int_{-\infty}^{0}\ldots\int_{-\infty}^{0}u^2f_x\left(u,a_1, \ldots,a_M,x_{M+1},\ldots, x_N\right)\mathrm{d}x_{M+1} \ldots \mathrm{d}x_N\mathrm{d}u,
\end{aligned}
\end{displaymath}
}
where $C$ is some constant, $a_i=\sqrt{\frac{4r_i}{\sigma^2}}$ for $i=1,\ldots,M$,
\begin{displaymath}
    f_x\left(x_T,x_1, \ldots x_N\right)=\frac{1}{\sqrt{(2 \pi)^{N+1}|\Sigma|}} e^{-\frac{\left(x-\mu\right)^{\top} \Sigma^{-1}\left(x-\mu\right)}{2}},
\end{displaymath}
\begin{displaymath}
x=\left(\begin{array}{c}
x_T\\
x_1\\
\vdots \\
x_i \\
\vdots
\end{array}\right),\
\mu=\left(\begin{array}{l}
\mu_1 \\
\mu_2
\end{array}\right) ,\
\Sigma=\left(\begin{array}{ll}
\Sigma^3_{11} & \Sigma^3_{12} \\
\Sigma^3_{21} & \Sigma^3_{22}
\end{array}\right), \ 
\mu_1=\sqrt{\frac{4r_0}{\sigma^2}}e^{ -\frac{\lambda T}{2}}.
\end{displaymath}

Although $m^4\left(\cdot\right)$ is not a $\left(p, C\right)$-smooth function or even continuous function in $ \left[0,+\infty\right)^{N}$. But it is able to divide $\left[0,+\infty\right)^{N}$ into regions such that in each region $m^4\left(\cdot\right)$ is a $\left(p, C\right)$-smooth function. Further, in each region, $m^4\left(\cdot\right) \in \mathcal{H}(l, \mathcal{P})$. By Lemma \ref{lemma4} and Theorem 1 of \cite{Kohler2021}, there exist a neural network $ \left(L, r\right)$ that makes it possible to divide $\left[0,+\infty\right)^{N}$ into regions, and a neural network ${y^{\theta} \in \mathcal{Y} \left(L_n, r_n\right)}$ that approximates the regression function in each region. Concatenate two neural networks in series and the proof is complete.
\end{proof}

\begin{remark}
    From the above discussion, we can conclude that our proposed framework for predicting path-dependent processes $W_t$ is broadly applicable to situations where the relationship between discrete observations of $V_t$ and $W_t$ can be described by a series of $\left(p, C\right)$-smooth functions. Therefore, we can also extend the framework to some cases where $V_t\ne W_t$. A simple example is that observed historical information is attached with independent noise, similar to Kalman filtering, e.g., $W_t=B_t^H, \ V_t=B_t^H+B_t$, where $B_t^H$ and $B_t$ are independent. Moreover, Lemma \ref{lemma4} allows us to divide several input regions and consider the input-output relationship mentioned in Remark \ref{remark4} case by case.
\end{remark}

\section{Numerical simulation}
\label{sec:4}
In this section, we aim to demonstrate the effectiveness of the framework proposed in the paper through numerical simulation. We apply the method to two examples: the fBm and the fOU process. We want to show that the method is accurate under discrete observations. A method is to generate independent sample paths $\left(S_{t,i}, \ t\in\left[0,T\right]\right)$ and use a trained neural network $\hat{y}^{\theta}$ to predict each path separately. We can get the error between the predicted value $\hat{y}^{\theta}\left(S_{t_1,i},\ldots,S_{t_N,i}\right)$ and the true value of the sample paths $S_{T,i}$, and calculate the mean error (ME)
\begin{displaymath}
    \frac{1}{N'}\sum_{i=1}^{N'}\left(S_{T,i}-\hat{y}^{\theta}\left(S_{t_1,i},\ldots,S_{t_N,i}\right)\right)
\end{displaymath}
and mean square error (MSE) 
\begin{displaymath}
    \frac{1}{N'}\sum_{i=1}^{N'}\left(S_{T,i}-\hat{y}^{\theta}\left(S_{t_1,i},\ldots,S_{t_N,i}\right)\right)^2.
\end{displaymath}
Using MSE as an evaluation metric, we compare the theoretically optimal predictor with the obtained deep neural network and also analyze the impact of factors such as prediction period, Hurst index, etc. on the accuracy.

Moreover, we intend to demonstrate that as $N \to \infty$, we can approximate the prediction under continuous observation $\mathbb{E}\left[S_{T} \mid \mathcal{F} ^{C,S}_s\right]$ with the prediction under discrete observation $\hat{y}^{\theta}\left(S_{t_1},\ldots,S_{t_N}\right)$. We therefore present some prediction theories with continuous filtrations and use the theoretical output as benchmarks.

In the deep neural network training process, for each case, we trained 3000 batches, generating $2^{12}$ sample paths per batch for training. We adopt a decreasing learning rate strategy, with an initial learning rate of 0.01, and the learning rate decreases to 0.95 of the original learning rate for every 10 training batches. We generate 10000 independent samples of each to test the method, i.e., $N'=10000$.

\subsection{Fractional Brownian motion}
In Table \ref{table1}, we demonstrate the results of predicting the fBms with different numbers of equally spaced grid points $N$, Hurst index $H$ and $s$. The MEs fluctuate within a small range around zero. This suggests that, as a whole, our predictions are not yielding large shifts. The MSEs show a decreasing trend as $s$ increases. This is consistent with the fact that the farther into the future it is, the more difficult it is to predict accurately. It is worth noting that the MSEs of the predictions are not significantly different as the number of equally spaced grid points changes. This suggests that not the more historical information is available from discrete observations, the smaller the prediction bias. Discrete observations here refer to the entirety of the historical information available to us, to be distinguished from the case where a portion of our known information is taken for prediction purposes.

Taking the $H=0.5$ case as a comparison, i.e., the sBm case, firstly the MSE is larger in $H=0.1,0.3,0.9$ cases but smaller in $H=0.7$ case, which implies that the path dependence may lead to larger or smaller prediction deviations. When $s = 2$, the MSE in the $H = 0.1$ case is smaller than the MSE in the $H = 0.9$ case. When $s=8$, the situation is reversed. The forecast time horizon affects the role of the Hurst index on forecast accuracy. In Table \ref{table2}, let $s=5$, $N=2^{12}$ and adjust $H, \ T$. As $T$ increases from 5.5 to 10, the MSE minimum case changes from $H = 0.8$ to $H = 0.7$, and the MSE difference multiples for the $H=0.1,0.9$ cases become smaller. This is consistent with the analysis results derived from Table \ref{table1}. 

\begin{table}
\centering
\resizebox{1.0\linewidth}{!}{
\begin{tabular}{crrrrrrrrrr}
\hline
$N$        & \multicolumn{1}{c}{ME} & \multicolumn{1}{c}{MSE} & \multicolumn{1}{c}{ME} & \multicolumn{1}{c}{MSE} & \multicolumn{1}{c}{ME} & \multicolumn{1}{c}{MSE} & \multicolumn{1}{c}{ME} & \multicolumn{1}{c}{MSE} & \multicolumn{1}{c}{ME} & \multicolumn{1}{c}{MSE} \\ \hline
         & \multicolumn{2}{c}{$H=0.1$}                        & \multicolumn{2}{c}{$H=0.3$}                        & \multicolumn{2}{c}{$H=0.5$}                        & \multicolumn{2}{c}{$H=0.7$}                        & \multicolumn{2}{c}{$H=0.9$}                        \\ \hline
         & \multicolumn{10}{c}{$s=2$}                                                                                                                                                                                                                                     \\ \hline
$2^{9}$  & -0.042                 & 20.978                  & 0.020                  & 19.517                  & 0.030                  & 18.370                  & -0.010                 & 17.141                  & 0.035                  & 27.022                  \\
$2^{10}$ & 0.043                  & 20.993                  & 0.004                  & 20.456                  & 0.035                  & 18.519                  & -0.054                 & 17.136                  & 0.037                  & 26.476                  \\
$2^{11}$ & -0.026                 & 21.382                  & 0.007                  & 20.287                  & 0.052                  & 18.504                  & 0.008                  & 16.835                  & 0.024                  & 26.511                  \\
$2^{12}$ & 0.021                  & 20.871                  & -0.007                 & 20.240                  & 0.026                  & 18.484                  & 0.007                  & 16.730                  & -0.105                 & 26.968                  \\
$2^{13}$ & -0.079                 & 21.081                  & 0.043                  & 20.022                  & -0.025                 & 18.056                  & 0.071                  & 17.121                  & 0.030                  & 26.553                  \\
$2^{14}$ & 0.084                  & 20.726                  & 0.031                  & 19.766                  & -0.005                 & 18.648                  & -0.034                 & 16.531                  & -0.012                 & 26.655                  \\
$2^{15}$ & 0.065                  & 20.463                  & -0.050                 & 19.930                  & 0.007                  & 18.516                  & 0.041                  & 16.719                  & -0.023                 & 26.131                  \\
$2^{16}$ & -0.129                 & 21.074                  & 0.021                  & 20.306                  & 0.011                  & 18.071                  & -0.012                 & 16.902                  & 0.056                  & 26.460                  \\ \hline
         & \multicolumn{10}{c}{$s=5$}                                                                                                                                                                                                                                     \\ \hline
$2^{9}$  & 0.094                  & 13.660                  & 0.079                  & 11.591                  & -0.017                 & 9.550                   & 0.061                  & 8.349                   & 0.012                  & 11.338                  \\
$2^{10}$ & -0.062                 & 13.749                  & 0.051                  & 11.485                  & -0.007                 & 9.414                   & -0.011                 & 8.350                   & -0.022                 & 10.843                  \\
$2^{11}$ & -0.027                 & 13.488                  & -0.026                 & 11.609                  & -0.004                 & 9.466                   & -0.028                 & 8.399                   & -0.006                 & 11.045                  \\
$2^{12}$ & -0.027                 & 13.578                  & 0.012                  & 11.734                  & 0.030                  & 9.435                   & -0.022                 & 8.267                   & 0.058                  & 10.828                  \\
$2^{13}$ & 0.026                  & 13.748                  & -0.027                 & 11.584                  & 0.035                  & 9.544                   & 0.012                  & 8.273                   & -0.005                 & 11.026                  \\
$2^{14}$ & -0.011                 & 13.775                  & -0.025                 & 11.511                  & -0.003                 & 9.418                   & 0.015                  & 8.309                   & 0.025                  & 11.015                  \\
$2^{15}$ & 0.059                  & 13.603                  & 0.002                  & 11.704                  & -0.004                 & 9.582                   & 0.029                  & 8.194                   & 0.012                  & 11.009                  \\
$2^{16}$ & 0.013                  & 13.738                  & -0.016                 & 12.009                  & -0.034                 & 9.506                   & -0.037                 & 8.268                   & 0.014                  & 11.115                  \\ \hline
         & \multicolumn{10}{c}{$s=8$}                                                                                                                                                                                                                                     \\ \hline
$2^{9}$  & -0.042                 & 5.266                   & 0.017                  & 3.689                   & -0.025                 & 2.585                   & 0.020                  & 2.247                   & 0.006                  & 2.635                   \\
$2^{10}$ & 0.010                  & 5.407                   & 0.013                  & 3.655                   & -0.023                 & 2.617                   & -0.016                 & 2.267                   & 0.034                  & 2.656                   \\
$2^{11}$ & 0.000                  & 5.424                   & -0.015                 & 3.684                   & 0.010                  & 2.699                   & -0.039                 & 2.277                   & 0.022                  & 2.816                   \\
$2^{12}$ & 0.029                  & 5.424                   & 0.023                  & 3.632                   & 0.011                  & 2.627                   & -0.013                 & 2.300                   & -0.004                 & 2.693                   \\
$2^{13}$ & 0.028                  & 5.294                   & 0.001                  & 3.745                   & 0.019                  & 2.643                   & -0.010                 & 2.327                   & 0.004                  & 2.765                   \\
$2^{14}$ & 0.024                  & 5.356                   & 0.010                  & 3.654                   & -0.001                 & 2.625                   & -0.028                 & 2.237                   & -0.010                 & 2.683                   \\
$2^{15}$ & 0.001                  & 5.140                   & -0.039                 & 3.692                   & -0.018                 & 2.622                   & 0.002                  & 2.254                   & -0.030                 & 2.696                   \\
$2^{16}$ & 0.019                  & 5.451                   & 0.014                  & 3.687                   & 0.020                  & 2.632                   & -0.005                 & 2.296                   & -0.002                 & 2.705                   \\ \hline
\end{tabular}
}
\caption{MEs and MSEs in the fBm cases with different $N, \ H, \ s$ for $T=10$.}
\label{table1}
\end{table}

\begin{table}
\centering
\begin{tabular}{lrrrrrrrrr}
\hline
\multicolumn{1}{c}{$H$} & \multicolumn{1}{c}{0.1} & \multicolumn{1}{c}{0.2} & \multicolumn{1}{c}{0.3} & \multicolumn{1}{c}{0.4} & \multicolumn{1}{c}{0.5} & \multicolumn{1}{c}{0.6} & \multicolumn{1}{c}{0.7} & \multicolumn{1}{c}{0.8} & \multicolumn{1}{c}{0.9} \\ \hline
$T=5.5$                 & 1.046                   & 0.764                   & 0.582                   & 0.471                   & 0.383                   & 0.338                   & 0.332                   & 0.331                   & 0.379                   \\
$T=6$                   & 2.226                   & 1.721                   & 1.412                   & 1.179                   & 1.004                   & 0.892                   & 0.860                   & 0.887                   & 1.036                   \\
$T=6.5$                 & 3.417                   & 2.847                   & 2.402                   & 2.053                   & 1.775                   & 1.612                   & 1.527                   & 1.561                   & 1.816                   \\
$T=7$                   & 4.773                   & 4.040                   & 3.441                   & 3.020                   & 2.646                   & 2.394                   & 2.270                   & 2.308                   & 2.768                   \\
$T=7.5$                 & 6.127                   & 5.356                   & 4.817                   & 4.170                   & 3.554                   & 3.273                   & 3.169                   & 3.273                   & 3.926                   \\
$T=8$                   & 7.651                   & 6.617                   & 5.974                   & 5.235                   & 4.578                   & 4.254                   & 4.033                   & 4.170                   & 5.127                   \\
$T=8.5$                 & 8.929                   & 8.204                   & 7.418                   & 6.385                   & 5.827                   & 5.260                   & 5.215                   & 5.459                   & 6.604                   \\
$T=9$                   & 10.638                  & 9.574                   & 8.744                   & 7.916                   & 6.921                   & 6.361                   & 5.992                   & 6.515                   & 7.886                   \\
$T=9.5$                 & 12.012                  & 11.419                  & 10.224                  & 9.246                   & 8.344                   & 7.651                   & 7.136                   & 7.662                   & 9.594                   \\
$T=10$                  & 13.487                  & 12.350                  & 11.702                  & 10.616                  & 9.625                   & 8.658                   & 8.557                   & 9.006                   & 10.738                  \\ \hline
\end{tabular}
\caption{MSEs in the fBm cases with different $H, \ T$ for $s=5$, $N=2^{12}$.}
\label{table2}
\end{table}

In Figure \ref{figure1}, we compare the MSEs of the theoretically optimal predictor (\ref{exact1}) with the MSEs of the trained deep neural network. The MSE gap, although slightly larger with increasing $T$, is always confined within a small range near 0. It demonstrates that the predictions obtained through our proposed framework are not significantly different from the theoretical optimal predictions, hence demonstrating the reliability of the method.

\begin{figure}[htbp]
\centering
\includegraphics[width=0.95\textwidth,trim=40 0 40 0,clip]{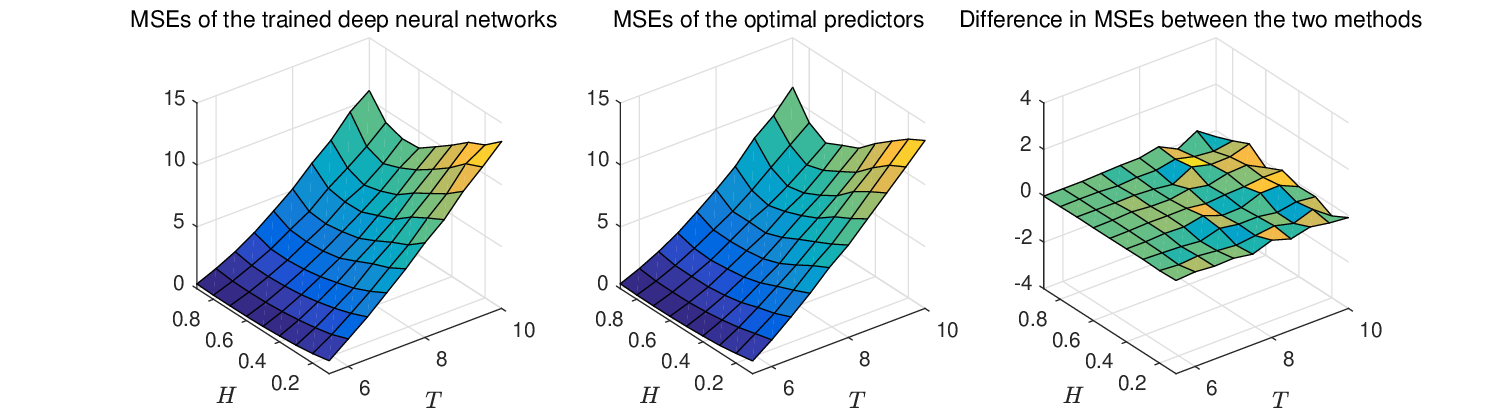}
\caption{MSEs of the two methods in the fBm cases for $s=5$, $N=2^{12}$ and difference.}
\label{figure1}
\end{figure}

Finally, given sample paths, predictions are made based on different numbers of equally spaced grid points in Figure \ref{figure2}. Consider the prediction of the fBm with continuous observations, i.e., 
\begin{displaymath}
    \mathbb{E}\left[B_{T}^H \mid \mathcal{F} ^{C,B^H}_s\right].
\end{displaymath}
This prediction problem is well solved by \cite{gripenberg_norros_1996} and \cite{Pipiras2001}:
\begin{displaymath}
    \mathbb{E}\left[B_{T}^H \mid \mathcal{F} ^{C,B^H}_s\right]=B^H_s+\int_0^s \Psi^H(s, T, v) \mathrm{d} B^H_v,
\end{displaymath}
where, for $v \in(0, s)$,
\begin{displaymath}
    \Psi^H(s, T, v)=\frac{\sin\left(\left(H- \frac{1}{2}\right)\pi  \right)}{\pi} v^{-H+ \frac{1}{2}}(s-v)^{-H+ \frac{1}{2}} \int_s^T \frac{z^{H- \frac{1}{2}}(z-s)^{H- \frac{1}{2}}}{z-v} \mathrm{d} z,
\end{displaymath}
and, for $v \in\{0, s\}$, we have $\Psi^H(s, T, v)=0$. 

Let $s=5$ and $T=10$. Given sample paths, Figure \ref{figure2} shows that the predictions based on discrete observations obtained through the deep neural network method converge to predictions based on continuous observations. When $H=0.1,0.3$, convergence is faster.

\begin{figure}[htbp]
\centering
\includegraphics[width=0.90\textwidth,trim=20 0 0 0,clip]{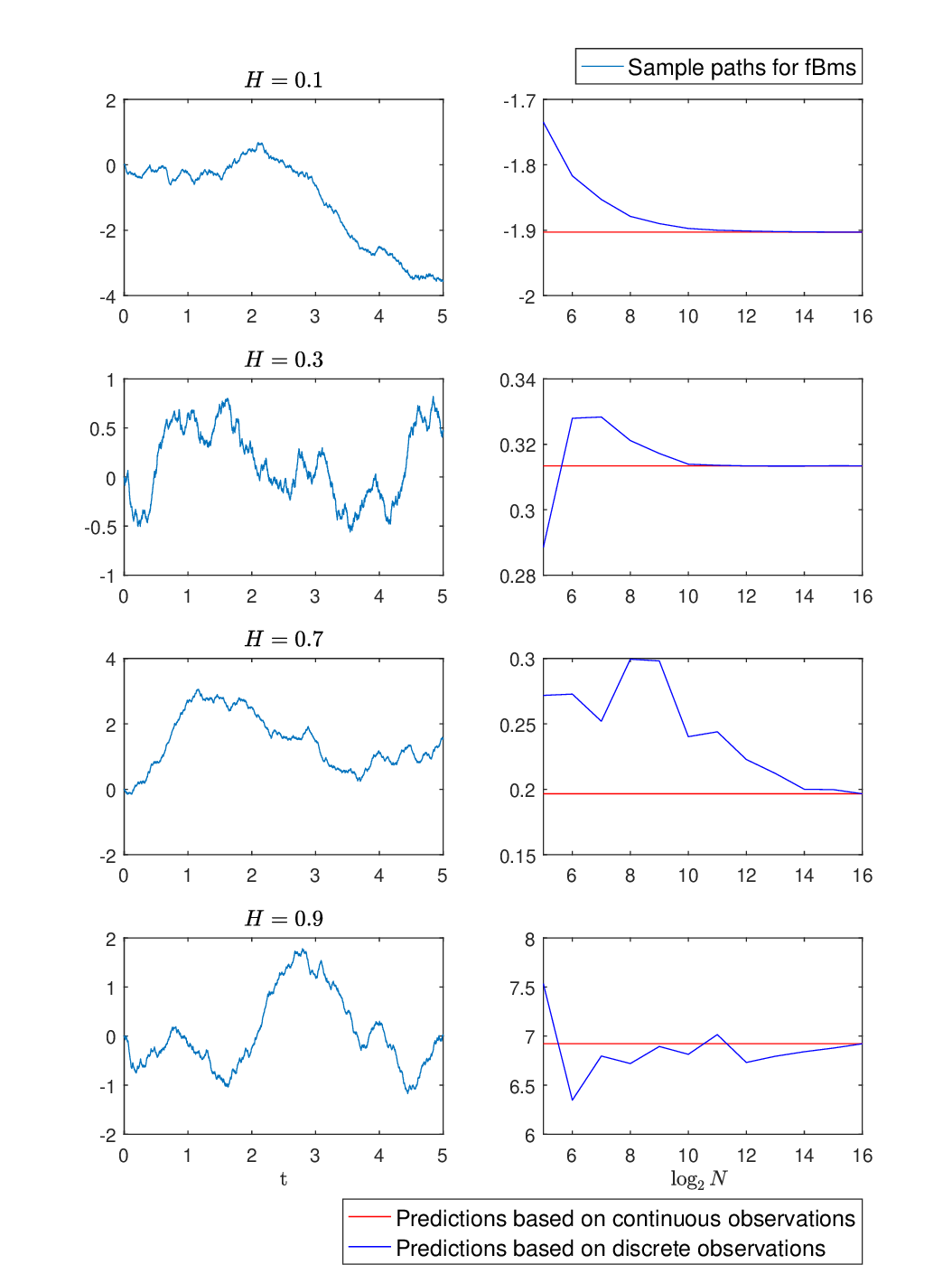}
\caption{Sample paths, predictions based on discrete observations and continuous observations in the fBm cases for $s=5$, $T=10$.}
\label{figure2}
\end{figure}

\subsection{Fractional Ornstein-Uhlenbeck process}
 Let $k(\cdot)=a_0=0,\ a(\cdot)=\frac{1}{2}$, and $\sigma(\cdot)=1$. In Table \ref{table3}, we demonstrate the results of predicting the fOU processes with different numbers of equally spaced grid points $N$, Hurst index $H$ and $s$. The MEs also fluctuate within a small range around zero. This suggests that, as a whole, our predictions are not yielding large shifts. The MSEs show a decreasing trend as $s$ increases. It is consistent with the fact that the farther into the future it is, the more difficult it is to predict accurately. The MSEs of the predictions are also not significantly different as the number of equally spaced grid points changes. 

It is not the case that the larger or smaller the $H$, the smaller the MSE is. When $s=2$, the MSEs are minimum for the $H=0.7$ case. When $s=5,8$, the MSEs decrease with increasing $H$. It implies that the path dependence may lead to larger or smaller prediction deviations and the forecast time horizon jointly affects them. In Table \ref{table4}, let $s=5$, $N=2^{12}$ and adjust $H, \ T$. The MSEs decrease with increasing $H$ and the MSE difference multiples for the $H=0.1,0.9$ cases become smaller. This is consistent with the analysis results derived from Table \ref{table3}.

\begin{table}
\centering
\resizebox{1.0\linewidth}{!}{
\begin{tabular}{crrrrrrrrrr}
\hline
$N$        & \multicolumn{1}{c}{ME} & \multicolumn{1}{c}{MSE} & \multicolumn{1}{c}{ME} & \multicolumn{1}{c}{MSE} & \multicolumn{1}{c}{ME} & \multicolumn{1}{c}{MSE} & \multicolumn{1}{c}{ME} & \multicolumn{1}{c}{MSE} & \multicolumn{1}{c}{ME} & \multicolumn{1}{c}{MSE} \\ \hline
         & \multicolumn{2}{c}{$H=0.1$}                      & \multicolumn{2}{c}{$H=0.3$}                      & \multicolumn{2}{c}{$H=0.5$}                      & \multicolumn{2}{c}{$H=0.7$}                      & \multicolumn{2}{c}{$H=0.9$}                      \\ \hline
         & \multicolumn{10}{c}{$s=2$}                                                                                                                                                                                                                                   \\ \hline
$2^{9}$  & 0.012                  & 16.003                  & 0.014                  & 15.655                  & 0.038                  & 14.610                  & 0.004                  & 14.145                  & -0.009                 & 16.472                  \\
$2^{10}$ & 0.054                  & 15.824                  & 0.015                  & 15.660                  & -0.009                 & 14.486                  & 0.054                  & 13.759                  & -0.057                 & 16.629                  \\
$2^{11}$ & 0.013                  & 15.665                  & -0.014                 & 16.038                  & -0.002                 & 14.697                  & -0.020                 & 14.175                  & 0.032                  & 16.414                  \\
$2^{12}$ & -0.031                 & 15.446                  & 0.020                  & 15.281                  & 0.036                  & 14.617                  & 0.059                  & 14.111                  & -0.063                 & 16.872                  \\
$2^{13}$ & 0.074                  & 15.559                  & 0.026                  & 15.730                  & 0.027                  & 14.163                  & -0.024                 & 13.932                  & -0.105                 & 16.244                  \\
$2^{14}$ & 0.015                  & 15.528                  & -0.058                 & 15.359                  & -0.020                 & 14.545                  & -0.007                 & 13.921                  & 0.053                  & 16.476                  \\
$2^{15}$ & 0.014                  & 15.960                  & 0.071                  & 15.770                  & 0.050                  & 14.657                  & 0.055                  & 14.232                  & 0.002                  & 16.523                  \\
$2^{16}$ & 0.077                  & 15.591                  & 0.019                  & 15.295                  & 0.001                  & 14.681                  & 0.040                  & 13.827                  & -0.020                 & 16.627                  \\ \hline
         & \multicolumn{10}{c}{$s=5$}                                                                                                                                                                                                                                   \\ \hline
$2^{9}$  & -0.024                 & 15.329                  & -0.031                 & 14.383                  & -0.034                 & 13.195                  & 0.025                  & 10.013                  & 0.022                  & 7.659                   \\
$2^{10}$ & -0.038                 & 15.397                  & 0.042                  & 14.268                  & -0.031                 & 13.351                  & -0.043                 & 10.139                  & 0.025                  & 7.670                   \\
$2^{11}$ & 0.040                  & 15.004                  & 0.044                  & 14.226                  & -0.048                 & 13.290                  & 0.011                  & 10.193                  & 0.003                  & 7.604                   \\
$2^{12}$ & 0.026                  & 14.708                  & -0.004                 & 13.706                  & 0.046                  & 13.160                  & 0.004                  & 10.123                  & 0.033                  & 7.521                   \\
$2^{13}$ & -0.018                 & 14.973                  & 0.028                  & 14.225                  & -0.036                 & 13.328                  & -0.025                 & 10.059                  & -0.013                 & 7.355                   \\
$2^{14}$ & 0.016                  & 15.019                  & 0.012                  & 14.242                  & -0.036                 & 13.348                  & 0.011                  & 10.320                  & 0.025                  & 7.644                   \\
$2^{15}$ & -0.019                 & 14.934                  & 0.046                  & 14.364                  & -0.012                 & 13.273                  & -0.040                 & 10.194                  & -0.028                 & 7.680                   \\
$2^{16}$ & 0.035                  & 14.929                  & 0.044                  & 14.359                  & 0.023                  & 13.246                  & -0.068                 & 10.294                  & 0.003                  & 7.702                   \\ \hline
         & \multicolumn{10}{c}{$s=8$}                                                                                                                                                                                                                                   \\ \hline
$2^{9}$  & 0.006                  & 10.017                  & -0.055                 & 8.193                   & -0.015                 & 5.597                   & -0.012                 & 3.591                   & -0.023                 & 2.402                   \\
$2^{10}$ & -0.040                 & 10.182                  & 0.022                  & 8.188                   & -0.003                 & 5.646                   & -0.012                 & 3.742                   & 0.008                  & 2.347                   \\
$2^{11}$ & -0.007                 & 10.423                  & -0.005                 & 8.254                   & 0.011                  & 5.608                   & -0.004                 & 3.660                   & -0.015                 & 2.352                   \\
$2^{12}$ & -0.047                 & 10.278                  & -0.004                 & 8.130                   & -0.004                 & 5.518                   & 0.038                  & 3.588                   & -0.009                 & 2.358                   \\
$2^{13}$ & -0.032                 & 10.299                  & 0.001                  & 8.116                   & 0.009                  & 5.592                   & 0.012                  & 3.671                   & -0.020                 & 2.349                   \\
$2^{14}$ & -0.005                 & 10.126                  & 0.000                  & 7.986                   & -0.020                 & 5.594                   & 0.008                  & 3.573                   & 0.001                  & 2.440                   \\
$2^{15}$ & 0.036                  & 10.177                  & 0.011                  & 8.190                   & 0.006                  & 5.604                   & 0.013                  & 3.635                   & 0.010                  & 2.404                   \\
$2^{16}$ & -0.002                 & 10.400                  & 0.021                  & 8.230                   & -0.021                 & 5.576                   & 0.013                  & 3.692                   & 0.004                  & 2.361                   \\ \hline
\end{tabular}
}
\caption{MEs and MSEs in the fOU process cases with different $N, \ H, \ s$ for $T=10$.}
\label{table3}
\end{table}

\begin{table}
\centering
\begin{tabular}{lrrrrrrrrr}
\hline
\multicolumn{1}{c}{$H$} & \multicolumn{1}{c}{0.1} & \multicolumn{1}{c}{0.2} & \multicolumn{1}{c}{0.3} & \multicolumn{1}{c}{0.4} & \multicolumn{1}{c}{0.5} & \multicolumn{1}{c}{0.6} & \multicolumn{1}{c}{0.7} & \multicolumn{1}{c}{0.8} & \multicolumn{1}{c}{0.9} \\ \hline
$T=5.5$                 & 1.561                   & 1.268                   & 0.957                   & 0.755                   & 0.453                   & 0.436                   & 0.376                   & 0.335                   & 0.367                   \\
$T=6$                   & 3.320                   & 2.945                   & 2.359                   & 2.004                   & 1.524                   & 1.262                   & 1.034                   & 0.850                   & 0.849                   \\
$T=6.5$                 & 5.106                   & 4.504                   & 3.963                   & 3.345                   & 2.773                   & 2.341                   & 1.848                   & 1.576                   & 1.443                   \\
$T=7$                   & 6.859                   & 6.273                   & 5.627                   & 4.937                   & 4.321                   & 3.551                   & 2.866                   & 2.370                   & 2.142                   \\
$T=7.5$                 & 8.428                   & 7.829                   & 7.247                   & 6.488                   & 5.897                   & 4.753                   & 3.788                   & 3.249                   & 2.874                   \\
$T=8$                   & 9.727                   & 9.270                   & 8.605                   & 7.894                   & 7.333                   & 6.087                   & 5.189                   & 4.307                   & 3.802                   \\
$T=8.5$                 & 11.121                  & 10.630                  & 10.094                  & 9.333                   & 8.820                   & 7.566                   & 6.281                   & 5.379                   & 4.534                   \\
$T=9$                   & 12.522                  & 12.161                  & 11.575                  & 10.900                  & 10.426                  & 8.919                   & 7.605                   & 6.400                   & 5.603                   \\
$T=9.5$                 & 13.747                  & 13.549                  & 12.887                  & 12.363                  & 11.933                  & 10.248                  & 8.617                   & 7.608                   & 6.591                   \\
$T=10$                  & 15.049                  & 14.701                  & 14.254                  & 13.616                  & 13.219                  & 11.675                  & 10.309                  & 8.830                   & 7.571                   \\ \hline
\end{tabular}
\caption{MSEs in the fOU process cases with different $H, \ T$ for $s=5$, $N=2^{12}$.}
\label{table4}
\end{table}

In Figure \ref{figure3}, we take the MSEs of the theoretically optimal predictor (\ref{predictfou}) as the benchmarks, which are the smallest MSEs we can theoretically obtain. By Proposotion 6.3 in \cite{Hu2005IntegralTA},
\begin{displaymath}
    \begin{aligned}
        &\operatorname{Cov}\left[\int_0^T \mathrm{e}^{-\frac{T-s}{2}} \mathrm{d} B_s^H, \int_0^{t_i} \mathrm{e}^{-\frac{t_i-s}{2}} \mathrm{d} B_s^H\right] \\
        =&\mathbb{E}\left[\int_0^T \mathrm{e}^{-\frac{T-s}{2}} \mathrm{d} B_s^H\int_0^{t_i} \mathrm{e}^{-\frac{t_i-s}{2}} \mathrm{d} B_s^H\right] \\
        =&\mathbb{E}\left[\int_0^T \mathrm{e}^{-\frac{T-s}{2}} \mathrm{d} B_s^H\int_0^{T} \mathrm{e}^{-\frac{t_i-s}{2}}\mathbf{1} _{\left [0,t_i\right ]}\left (s\right ) \mathrm{d} B_s^H\right] \\ 
        =&\int_0^T \Gamma_{H, T}^* \mathrm{e}^{-\frac{T-s}{2}} \Gamma_{H, T}^* \mathrm{e}^{-\frac{t_i-s}{2}}\mathbf{1} _{\left [0,t_i\right ]}\left (s\right ) \mathrm{d} s,
    \end{aligned}
\end{displaymath}
where
\begin{displaymath}
    \Gamma_{H, T}^* f(s)=\left(H-\frac{1}{2}\right) \kappa_H s^{\frac{1}{2}-H} \int_s^T u^{H-\frac{1}{2}}(u-s)^{H-\frac{3}{2}} f(u) \mathrm{d}u.
\end{displaymath}
Similarly, we can do the same for $\operatorname{Cov}\left[\int_0^{t_i} \mathrm{e}^{\int_s^{t_i} \mu (s)\mathrm{d} s} \mathrm{d} B_s^H, \int_0^{t_j} \mathrm{e}^{\int_s^{t_j} \mu (s)\mathrm{d} s} \mathrm{d} B_s^H\right]$, and consequently get each item in (\ref{predictfou}).

In Figure \ref{figure3}, we compare the MSEs of the theoretically optimal predictor (\ref{predictfou}) with the MSEs of the trained deep neural network. The MSE gap is confined within a small range near 0, which demonstrates that the predictions obtained through our proposed framework are not significantly different from the theoretical optimal predictions.

\begin{figure}[htbp]
\centering
\includegraphics[width=0.95\textwidth,trim=40 0 40 0,clip]{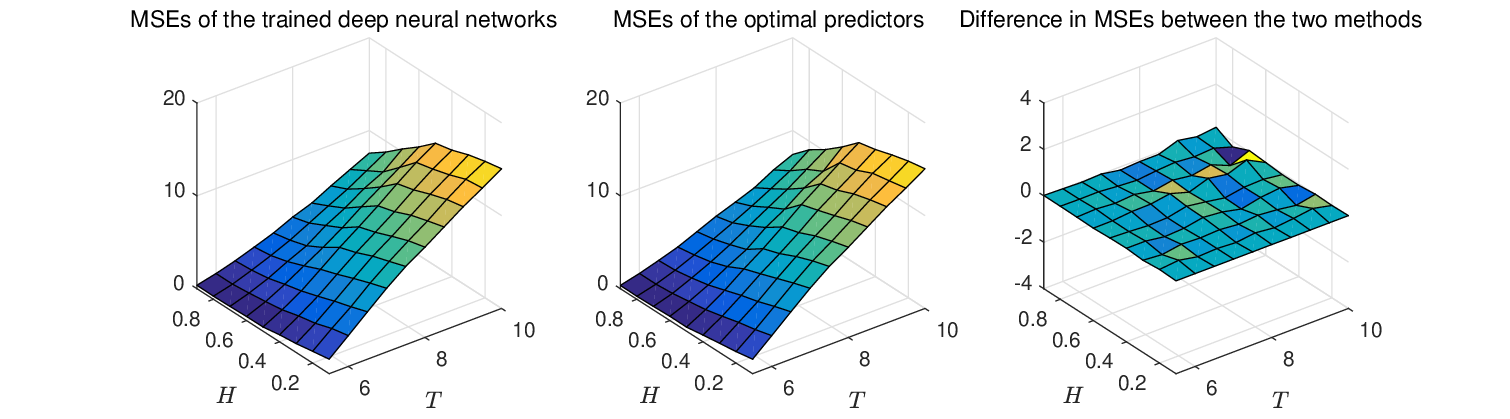}
\caption{MSEs of the two methods in the fOU process cases for $s=5$, $N=2^{12}$ and difference.}
\label{figure3}
\end{figure}

Consider the predictions of the fOU processes based on continuous observations, i.e.,
\begin{displaymath}
    \mathbb{E}\left[A_{T} \mid \mathcal{F} ^{C,A}_s\right].
\end{displaymath}
where $A_{t}$ is the solution of the SDE (\ref{FOU}) with $k(\cdot)=a_0=0, \ a(\cdot)=\frac{1}{2}$, and $\sigma(\cdot)=1$. This prediction problem is well solved by \cite{Fink2013}. The solution $A_t$ is given by
\begin{displaymath}
    A_t=\int_0^t e^{-\frac{t-v}{2}} \mathrm{d} B^H_v,
\end{displaymath}
and
\begin{displaymath}
    \mathbb{E}\left[A_{T} \mid \mathcal{F} ^{C,A}_s\right]=A_s e^{-\frac{T-s}{2}}+\int_0^s \Psi_c^H(s, T, v) \mathrm{d} B^H_v,
\end{displaymath}
where for $v \in(0, s)$,
\begin{displaymath}
\Psi_c^H(s, T, v) =\frac{\sin\left(\left(H- \frac{1}{2}\right)\pi  \right)}{\pi} v^{-H+ \frac{1}{2}}(s-v)^{-H+ \frac{1}{2}} \int_s^T \frac{z^{H- \frac{1}{2}}(z-s)^{H- \frac{1}{2}}}{z-v} e^{-\frac{T-v}{2}} \mathrm{d} z,
\end{displaymath}
and for $v \in\{0, s\}$, we have $\Psi_c^H(s, T, v)=0$.

Let $s=5$ and $T=10$. Given sample paths, Figure \ref{figure4} shows that the predictions based on discrete observations obtained through the deep neural network method converge to predictions based on continuous observations. When $H=0.1,0.3,0.7$, convergence is faster.

\begin{figure}[htbp]
\centering
\includegraphics[width=0.90\textwidth,trim=20 0 0 0,clip]{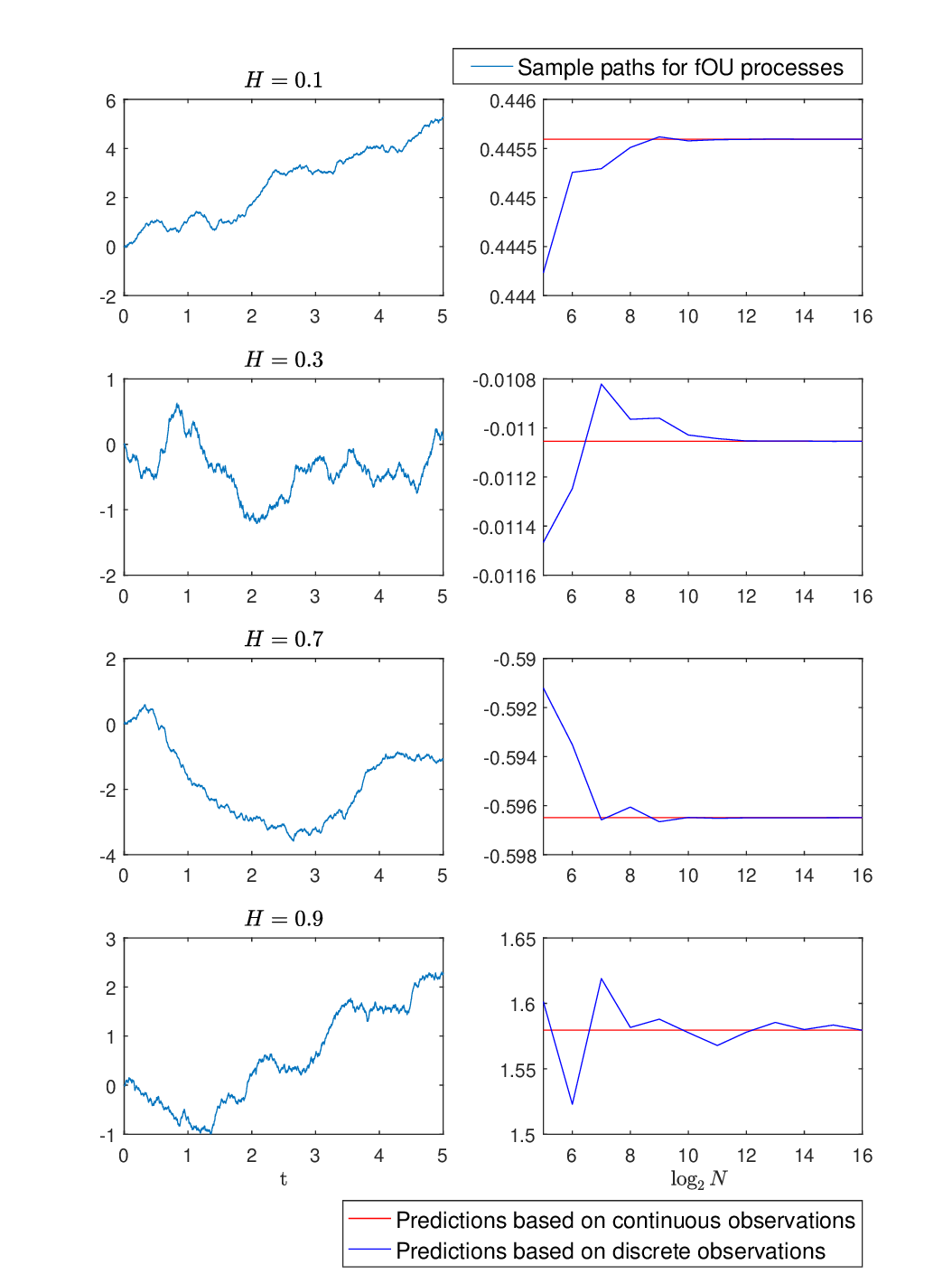}
\caption{Sample paths, predictions based on discrete observations and continuous observations in the fOU process cases for $s=5$, $T=10$.}
\label{figure4}
\end{figure}

\section{Conclusions}
\label{sec:5}
In this paper, we propose a deep learning method to predict path-dependent processes by discretely observing historical information. This method is implemented by considering the prediction as a nonparametric regression and obtaining the regression function through simulated samples and deep neural networks. We theoretically demonstrate the applicability of the method to fBm and the solutions of some stochastic differential equations driven by it, and further discuss the scope of the method. Our proposed framework for predicting path-dependent processes $W_t$ is broadly applicable to situations where the relationship between discrete observations of $V_t$ and $W_t$ can be described by a series of $\left(p, C\right)$-smooth functions. By dividing several input regions, we can also consider the input-output relationship case by case. Moreover, with the frequency of discrete observations tending to infinity, the predictions based on discrete observations converge to the predictions based on continuous observations. 

In numerical simulation, we apply the method to the fBm and the fOU process as examples. The further into the future one needs to predict, the more difficult to predict accurately. It is not the case that the larger or smaller the $H$, the smaller the prediction error is. The forecast time horizon affects the role of the Hurst index on forecast accuracy. Moreover, it is worth noting that the not the more historical information is available from discrete observations, the smaller the prediction bias. Comparing the results with the theoretical optimal predictions and taking the MSE as a measure, the numerical simulations demonstrate that the method can generate accurate results. Comparing the predictions based on discrete observations and continuous observations, the predictions based on discrete observations obtained through the deep neural network method converge to predictions based on continuous observations, which implies that we can make approximations by the method.

\acks{The authors are grateful for financial support from the National Key R\&D Program of China (No. 2023YFA1009200).}

\newpage

\appendix
\section{}
\label{app:theorem}
Based on the notion of piecewise smooth functions proposed by \cite{Imaizumi2019} and boundary fragment classes developed by \cite{Dudley1974}, we introduce a special class of functions that take 1 on some regions of the state space whose boundaries are formed by a series of smooth functions, and demonstrate the existence of neural networks that can approximate this class of functions.
 
\begin{definition}\label{dfn1}
For  $\lambda, J, R \in \mathbb{N}$ and $\mathcal{P} \subseteq[1, \infty) \times \mathbb{N}$, we define
\begin{displaymath}
    \mathcal{G}\left(R, \mathcal{P}\right):=\left\{f\left(x\right)=\prod_{i=1}^R f_i\left(x\right): f_i\left(x\right) \in \mathcal{H} \left(p_i, D_i\right) \text { and }\left(p_i, D_i\right) \in \mathcal{P}\right\} ,
\end{displaymath}
\begin{displaymath}
    \mathcal{K}\left(\lambda, J, \mathcal{P}\right):=\left\{f\left(x\right)=\max _{1 \leq j \leq J} f_j(x): f_j(x) \in \mathcal{G}\left(R_j, \mathcal{P}_j\right), \lambda=\sum_{j=1}^J R_j, \mathcal{P}_j \subseteq \mathcal{P}\right\} .
\end{displaymath}
\end{definition}

Obviously, the functions in the $\mathcal{K}\left(\lambda, J, \mathcal{P}\right)$ take 1 for some blocks and 0 for the rest of the regions and the functions themselves are not differentiable or even discontinuous at the edges of these regions. It is worth noting that each edge function can have a different smoothness $p=q+s$ and a different input dimension $D$.

\begin{lemma}
\label{lemma4}
Let the training sets  $(X, Y),\left(X_1, Y_1\right), \ldots,\left(X_n, Y_n\right)$ be independent and identically distributed random values such that $\operatorname{supp}(X)$ is bounded and $\mathbb{E}\left[e^{c Y^2} \right]<\infty$
for some constant $c>0$. Let $f\in \mathcal{K}(\lambda, J, \mathcal{P})$, where $\lambda, J \in \mathbb{N}$, $\mathcal{P} \subseteq[1, \infty) \times \mathbb{N}$, and $Q_i \in \mathbb{N}, \ i=1,\ldots,\lambda$, sufficiently large. Then, there exist $D_{i }, \ p_{i}, \ q_{i}, \ R_{i}, \ i=1,\ldots,\lambda,$ and a neural network $f^\theta \in \mathcal{Y} \left(L, r\right)$ with the property that
\begin{displaymath}
    \|f(x)-f^\theta\left(x\right)\|_{2,\left[-c_1,c_1\right]^d} \leq c_{2} c_{1}^{4\left(\max_{0 \leq i \leq \lambda}q_{i }+1\right)} \max_{0 \leq i \leq \lambda} Q_i^{-2 p_i},
\end{displaymath}
where $c_1$ and $c_2$ are constants,
\begin{displaymath}
    \begin{aligned}
L  =&\max_{1 \leq i \leq  \lambda} \left[5 Q_i^{D_i}+\left\lceil\log _4\left(Q_i^{2 p_i+4{D_i} \left(q_i+1\right)}  e^{4\left(q_i+1\right)\left(Q_i^{D_i}-1\right)}\right)\right\rceil \right.\\
&\left.\left\lceil\log _2(\max_{1 \leq i \leq  \lambda} \{q_i, {D_i}\}+1)\right\rceil+\left\lceil\log _4\left(Q_i^{2 p_i}\right)\right\rceil\right]+4+\left\lceil\log _2 d\right\rceil, \\
r  =&\sum_{i=1}^\lambda 132 \cdot 2^{D_i} \left\lceil e^{D_i}\right\rceil \left(\begin{array}{c}{D_i}+q_i \\ {D_i}\end{array}\right)  \max_{1 \leq i \leq  \lambda} \left\{q_i+1, {D_i}^2\right\}.
\end{aligned}
\end{displaymath}
\end{lemma}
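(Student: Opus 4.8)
The plan is to exploit the three-level structure of $f\in\mathcal{K}(\lambda,J,\mathcal{P})$, which by Definition \ref{dfn1} decomposes as $f(x)=\max_{1\le j\le J}\prod_{i}f_{j,i}(x)$, where each factor $f_{j,i}$ is a $(p_i,C)$-smooth hierarchical composition function of $D_i$ variables and the total number of factors is $\lambda=\sum_j R_j$. I would build the approximating network $f^\theta$ in three stages mirroring this structure: first a block of parallel subnetworks approximating the smooth factors, then a block realizing the products $\prod_i f_{j,i}$, and finally a block computing the maximum over $j$. Concatenating these stages in series and placing the parallel pieces side by side is exactly what produces the additive width $r=\sum_{i=1}^\lambda(\cdots)$ obtained by summing the widths of the individual factor-networks, and a depth $L$ that is a maximum over the stages plus an $O(\log J)+\lceil\log_2 d\rceil$ routing overhead; here the flat index $i=1,\dots,\lambda$ in the conclusion relabels all factors across all $j$.

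For the first stage I would invoke the deep ReLU approximation theory for $(p,C)$-smooth and hierarchical composition functions (the Kohler--Langer type construction underlying Theorem 1 of \cite{Kohler2021}, see also \cite{Langer2021}). On the cube $[-c_1,c_1]^{D_i}$ a $(p_i,C)$-smooth function of $D_i$ variables admits a network whose depth is on the order of $Q_i^{D_i}+\lceil\log_4(Q_i^{2p_i})\rceil$ and whose width is on the order of $2^{D_i}\lceil e^{D_i}\rceil\binom{D_i+q_i}{D_i}$, with error controlled by $Q_i^{-2p_i}$ up to a power of $c_1$. The partition of the cube into roughly $Q_i^{D_i}$ cells explains the $Q_i^{D_i}$ summand in $L$, while the local polynomial reconstruction of degree $q_i$ accounts for the binomial factor $\binom{D_i+q_i}{D_i}$ and the $\max\{q_i+1,D_i^2\}$ term in $r$.

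For the second stage I would use the standard fact that multiplication is approximable by ReLU networks: since $ab=\tfrac14\big((a+b)^2-(a-b)^2\big)$ and the square map is approximable to accuracy $\varepsilon$ by a network of depth $O(\log(1/\varepsilon))$, one gets a two-input multiplication gadget of logarithmic depth on any bounded domain, and chaining $R_j-1$ such gadgets realizes $\prod_i f_{j,i}$. Matching the target accuracy $Q_i^{-2p_i}$ forces each gadget to have depth $\sim\lceil\log_4(Q_i^{2p_i})\rceil$, and accommodating inputs of magnitude up to $c_1^{\,q_i+1}$ together with the $Q_i^{D_i}$-fold cellwise splitting inflates this to the block $\lceil\log_4(Q_i^{2p_i+4D_i(q_i+1)}e^{4(q_i+1)(Q_i^{D_i}-1)})\rceil\lceil\log_2(\max\{q_i,D_i\}+1)\rceil$ appearing in $L$. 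The final stage is exact rather than approximate, since $\max(a,b)=\tfrac12\big(a+b+\sigma(a-b)+\sigma(b-a)\big)$ is representable in a single ReLU layer and a balanced binary tree computes $\max_{1\le j\le J}$ in depth $O(\log J)$, contributing the additive constant $4$ in $L$.

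It remains to assemble the three stages and propagate the errors. By the triangle inequality the total error splits into the factor-approximation errors, the error introduced by the multiplication gadgets, and a zero contribution from the max, the last because the map $\max$ is $1$-Lipschitz and here represented exactly. The hard part will be the error bookkeeping through the products over the large domain $[-c_1,c_1]^d$: each factor can be as large as $c_1^{\,q_i+1}$ there, so multiplying $R_j$ approximate factors amplifies the errors by the product of their magnitude bounds. Controlling these amplification factors carefully, so that the accumulated product error remains bounded by $c_2\,c_1^{4(\max_i q_i+1)}\max_i Q_i^{-2p_i}$, is the technical core of the argument and is precisely what forces both the prefactor $c_1^{4(\max_i q_i+1)}$ and the squared rate $Q_i^{-2p_i}$ in the stated conclusion.
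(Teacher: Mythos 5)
Your first stage (sup-norm approximation of each smooth edge function $f_i$ on $[-c_1,c_1]^{D_i}$ via the Kohler--Langer construction, with error of order $c_1^{4(q_i+1)}Q_i^{-2p_i}$ and the stated per-factor depth and width) does coincide with the paper. Everything after that rests on a misreading of what $\mathcal{K}(\lambda,J,\mathcal{P})$ contains. As the remark following Definition \ref{dfn1} and the paper's proof make explicit, the elements of $\mathcal{K}(\lambda,J,\mathcal{P})$ are indicator functions: the proof writes $f=\mathbf{1}_{(0,\infty)}\bigl(\sum_{j}\prod_{i}\mathbf{1}_{(0,\infty)}(f_i)\bigr)$, so $f$ takes the value $1$ on a union of regions whose boundaries are zero sets of the smooth $f_i$ and $0$ elsewhere, and is discontinuous. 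That is why the conclusion is stated in the $L_2$ norm rather than the sup norm, and why the lemma is needed at all in Proposition \ref{theorem4}, where the regression function fails to be continuous. Your second and third stages --- approximate real-valued multiplication via $ab=\tfrac14\bigl((a+b)^2-(a-b)^2\bigr)$ with magnitude-amplified error propagation, and an exact Lipschitz max gate --- are therefore aimed at a different (continuous) target and do not reproduce the paper's argument.

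Two ingredients of the actual proof are missing. First, the gadgets realizing the product and the outer operation are exact Boolean circuits, not approximate multipliers: a network $f_{\text{mult}}$ of architecture $\bigl(\lceil\log_2 d\rceil,18d\bigr)$ whose output is positive iff all inputs are positive and $0$ otherwise, and a two-layer thresholding network $f_{\text{demo}}(x)=\mathbf{1}_{(1/c_5,\infty)}(x)$; these account for the additive $4+\lceil\log_2 d\rceil$ in $L$ and contribute no approximation error. Second, and this is the technical core, one must convert the sup-norm bound $\|f_i-\hat\phi_{1,i}\|_{\infty}\le c_3^i c_1^{4(q_i+1)}Q_i^{-2p_i}$ into an $L_2$ bound on $\|\mathbf{1}_{\{f_i>0\}}-\mathbf{1}_{\{\hat\phi_{1,i}>0\}}\|_{2,[-c_1,c_1]^d}$. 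The paper does this by a Fubini argument: integrating out one coordinate, the set on which the two indicators disagree is, on each line, an interval of length at most $|g_i-\phi_{1,i}^{\prime}|$, so the measure of the disagreement set is controlled by the sup-norm error of the edge approximation. Without this step the discontinuity cannot be handled --- no continuous ReLU network is sup-norm close to $f$ --- and your error bookkeeping through products of real-valued factors cannot yield the stated $L_2$ bound.
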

\begin{proof}
By the definition of $\mathcal{K}\left(\lambda, J, \mathcal{P}\right)$, for arbitrary $f\left( x \right) \in \mathcal{K}\left(\lambda, J, \mathcal{P}\right)$, there exists a set of $\left(p, C\right)$-smooth functions $f_i\left( x \right), i=1,\ldots,$ $\sum_{j=1}^J R_j$, such that
\begin{displaymath}
    \phi_{1, i}=f_i\left( x \right),\ \phi_{2, i}=\mathbf{1}_{\left(0, \infty\right)}\left(\phi_{1, i}\right), \ \phi_{3, j}=\prod_{i=\sum_{m=1}^{j-1} R_m+1}^{\sum_{m=1}^j R_m} \phi_{2, i}, 
\end{displaymath}
\begin{displaymath}
    \phi_4=\sum_{j=1}^J \phi_{3, j}, \ f\left( x \right)=\mathbf{1}_{(0, \infty)}\left( \phi_4 \right).
\end{displaymath}
Thus, there exists a set of functions $f_i\left( x \right)$, $i=1,\ldots,$ $\sum_{j=1}^J R_j,$ which is $\left( p_i,C \right)$-smooth respectively, $p_i=q_i+s_i$ for some $q_i \in \mathbb{N}_0$ and $0<s_i \leq 1$. Based on Theorem 2 introduced by \cite{Kohler2021}, there exists $\hat{\phi}_{1, i}\in \mathcal{Y} \left(L_i, r_i\right)$ such that
\begin{displaymath}
    \left\|f_i(x)-\hat{\phi}_{1, i}\left(x\right)\right\|_{\infty,\left[-c_1,c_1\right]^d} \leq c^{i}_{3} {c_1}^{4\left(q_{i}+1\right)} Q_i^{-2 p_i},
\end{displaymath}
where $c_3^i$ is a constant,
\begin{displaymath}
    \begin{aligned}
L_i  =&5 Q_i^{D_i}+\left\lceil\log _4\left(Q_i^{2 p_i+4{D_i} \left(q_i+1\right)}  e^{4\left(q_i+1\right)\left(Q_i^{D_i}-1\right)}\right)\right\rceil \left\lceil\log _2(\max_{1 \leq i \leq  \lambda} \{q_i, {D_i}\}+1)\right\rceil\\
&+\left\lceil\log _4\left(Q_i^{2 p_i}\right)\right\rceil, \\
r_i  =& 132 \cdot 2^{D_i} \left\lceil e^{D_i}\right\rceil \left(\begin{array}{c}{D_i}+q_i \\ {D_i}\end{array}\right)  \max_{1 \leq i \leq  \lambda} \left\{q_i+1, {D_i}^2\right\}.
\end{aligned}
\end{displaymath}

Without loss of generality, for $1 \leq u \leq D_{i}$, let $ x_{i_1}=x_{u}$, \ $g_i=f_i+x_1$ and $\phi_{1, i}^{\prime}=\hat{\phi}_{1, i}+x_{1}$,
\begin{displaymath}
\begin{aligned}
& \left\|\mathbf{1}_{\left\{f_i>0\right\}}-\mathbf{1}_{\left\{\hat{\phi}_{1, i}>0\right\}}\right\|_{2,\left[-c_1,c_1\right]^d}^2 \\
 =&\int\left(\mathbf{1}_{\left\{f_i>0\right\}}-\mathbf{1}_{\left\{\hat{\phi}_{1, i}>0\right\}}\right)^2 \mathrm{d}x \\
 =&\int_{-c_1}^{c_1}\ldots\int_{-c_1}^{c_1} \mathbf{1}_{\left\{f_i>0,\ \hat{\phi}_{1, i}\leq 0\right\}}+\mathbf{1}_{\left\{f_i\leq0,\ \hat{\phi}_{1, i}>0\right\}}\mathrm{d} x_{1}\ldots \mathrm{d} x_{D_i} . \\
\end{aligned}
\end{displaymath}
For fixed $\left(x_{2}, \ldots, x_{D_i}\right) \in\left[-c_1, c_1\right]^{D_i-1}$,  $\mathbf{1}_{\left\{f_i>0,\ \hat{\phi}_{1, i}<0\right\}}=\mathbf{1}_{\left[\phi_{1, t}^{\prime},\ g_i\right)}$. Thus,
\begin{displaymath}
    \int_{-c_1}^{c_1} \mathbf{1}_{\left\{f_i>0,\ \phi_{1, i} \leq 0\right\}} \mathrm{d} x_{1} \leq\left(g_i-\phi_{1, i}^{\prime}\right) \vee 0 .
\end{displaymath}
Similarly,
\begin{displaymath}
    \int_{-c_1}^{c_1} \mathbf{1}_{\left\{f_i \leq 0,\ \phi_{1, i}>0\right\}} \mathrm{d} x_{1} \leq\left(\phi_{1, i}^{\prime}-g_i\right) \vee 0 .
\end{displaymath}
Notice that $\left(b \vee 0\right)+\left(-b \vee 0\right)=|b|$, for any $x \in\left[-c_1, c_1\right]^d$, we have
\begin{displaymath}
    \begin{aligned}
& \int_{-c_1}^{c_1}\ldots\int_{-c_1}^{c_1} \mathbf{1}_{\left\{f_i>0,\ \hat{\phi}_{1, i} \leq 0\right\}}+\mathbf{1}_{\left\{f_i \leq 0,\ \hat{\phi}_{1, i}>0\right\}} \mathrm{d} x_{1}  \ldots \mathrm{d} x_{D_i} \\
 \leq & \int_{-c_1}^{c_1}\ldots\int_{-c_1}^{c_1}\left(\left(g_i-\phi_{1, i}^{\prime}\right) \vee 0\right)+\left(\left(\phi_{1, i}^{\prime}-g_i\right) \vee 0\right) \mathrm{d} x_{2} \ldots \mathrm{d} x_{D_i} \\
 \leq & c_{4}^i {c_1}^{4\left(q_i+1\right)} Q_i^{-2 p_i}
\end{aligned}
\end{displaymath}
for a constant $c_4^i>0$.

Based on Lemma 4 and Lemma 20 in \cite{Kohler2021}, there exists a network $f_{sq}$  with the network architecture $\left(1,18\right)$ satisfying $f_{sq}\left(m, n\right)=|m+n|-|m-n|$, for $m,n \in \left[0,1\right]$. 
Let 
\begin{displaymath}
    w=\left\lceil\log_2 d\right\rceil, \ \left(z_1, \ldots, z_{2^w}\right)=\left(x_{1}, x_{2}, \ldots, x_{d}, 1, \ldots, 1\right).
\end{displaymath}
In the first layer of $f_{\text{mult}}$, we compute 
\begin{displaymath}
    f_{sq}\left(z_1, z_2\right), \ f_{sq}\left(z_3, z_4\right), \ldots, f_{ q}\left(z_{2 w-1}, \ z_{2 q}\right),
\end{displaymath}
 which can be done by $18 \cdot 2^{w-1} \leq 18 d$ neurons. The output of the first layer is a vector of length $2^{w-1}$. This process is repeated for the output vectors until the output is a one-dimensional vector. If $m=0$, then we have $f_{sq}\left(m, n\right)=|n|-|n|=0$. Based on mathematical induction, if a element of the vector $x$ is equal to 0, $f_{\text {mult }}(x)=0$. Therefore, for $x \in \left[0,1\right]^d$, there exists a neural network $f_{\text{mult}}$ with the network architecture $\left(\left\lceil\log _2 d\right\rceil, 18 d\right)$ such that
\begin{equation}
\label{mult}
    f_{\text {mult }}(x) \begin{cases}>0, & \text { if } x_{i}>0, \ \forall \ 0 \leq i \leq d, \\ =0, & \text { otherwise. }\end{cases}    
\end{equation}

Let $f_{\text{demo}}\left(x\right)=-\sigma\left(-c_2\sigma\left(x\right)+1\right)+1$, where $\sigma\left(x\right)$ is the ReLU activation function. Then, for $x \in \left[0,1\right]$, there exists a neural network $f_{\text{demo}} \in \mathcal{Y} \left(3,2\right)$ such that
\begin{equation}
\label{demo}
    f_{\text{demo}}\left(x\right)=\mathbf{1}_{\left(\frac{1}{c_5}, \infty\right)}\left(x\right),    
\end{equation}
where $c_5 \geq 1$ is a constant.

Let
\begin{displaymath}
    f^\theta\left( x \right)=\mathbf{1}_{(0, \infty)} \left( \sum_{j=1}^J \left(  \prod_{i=\sum_{i=1}^{j-1} R_i+1}^{\sum_{i=1}^j R_i} \left( \mathbf{1}_{(0, \infty)} \left(\hat{\phi}_{1, i}\left(x\right) \right)\right)\right) \right).
\end{displaymath}
By (\ref{mult}) and (\ref{demo}), given a sufficiently small $c_2$, $\prod\left(\mathbf{1}_{(0, \infty)}\left ( x  \right ) \right)$ and $\mathbf{1}_{(0, \infty)}\left ( x  \right )$ can be implemented through networks $f_{\text {mult }}(x)$ and $f_{\text{demo}}\left(x\right)$. Therefore, $f^\theta \in \mathcal{Y} \left(L, r\right)$, where
\begin{displaymath}
    \begin{aligned}
L =& \max_{1 \leq i \leq  \lambda} \left[5 Q_i^{D_i}+\left\lceil\log _4\left(Q_i^{2 p_i+4{D_i} \left(q_i+1\right)}  e^{4\left(q_i+1\right)\left(Q_i^{D_i}-1\right)}\right)\right\rceil \right.\\
&\left. \left\lceil\log _2(\max_{1 \leq i \leq  \lambda} \{q_i, {D_i}\}+1)\right\rceil+\left\lceil\log _4\left(Q_i^{2 p_i}\right)\right\rceil\right]+4+\left\lceil\log _2 d\right\rceil, \\
r  =&\sum_{i=1}^\lambda 132 \cdot 2^{D_i} \left\lceil e^{D_i}\right\rceil \left(\begin{array}{c}{D_i}+q_i \\ {D_i}\end{array}\right) \max_{1 \leq i \leq  \lambda}\left\{q_i+1, {D_i}^2\right\}.
\end{aligned}
\end{displaymath}
Moreover,
\begin{displaymath}
    \begin{aligned}
&  \|f(x)-f^\theta\left(x\right)\|_{2,\left[-c_1,c_1\right]^d} \\
 \leq & \sum_{j=1}^J\left\|\prod_{i=\sum_{i=1}^{j-1} R_i+1}^{\sum_{i=1}^j R_i} I_{\left\{f_i>0\right\}}(x)-\prod_{i=\sum_{i=1}^{j=-1} R_i+1}^{\sum_{i=1}^j R_i} I_{\left\{\hat{\phi}_{1, i}>0\right\}}(x)\right\|_{2,\left[-c_1,c_1\right]^d}\\
 \leq & c_{5} \max_{1 \leq i \leq \lambda}\left\|f_i(x)-\hat{\phi}_{1, i}\left(x \right)\right\|_{2,\left[-c_1,c_1\right]^d} \\
 \leq & c_{6} c_{1}^{4\left(\max_{1 \leq i \leq \lambda}q_{i}+1\right)} \max_{1 \leq i \leq \lambda} Q_i^{-2 p_i}, 
\end{aligned}
\end{displaymath}
where $c_5,c_6$ are constants.
\end{proof}

\begin{remark}
  Lemma \ref{lemma4} illustrates that a sufficiently deep neural network can approximate the function very well. Based on Theorem 2 introduced by \cite{Kohler2021}, a sufficiently wide neural network can also yield similar results. More specifically, replacing $L$ and $r$ with 
\begin{displaymath}
    \begin{aligned}
L  =&9+\left\lceil\log _4\left(\max_{1 \leq i \leq \lambda} Q_i^{2 p_i}\right)\right\rceil\left(\left\lceil\log _2\left(\max_{1 \leq i \leq \lambda}\left\{D_{i }, q_{i }\right\}+1\right)\right\rceil+1\right)\\
&+\left\lceil\log _2 \max_{1 \leq j \leq J}R_{j }\right\rceil, \\
r  =&\sum_{i=1}^\lambda 2^{D_i} 64\left(\begin{array}{c}
D_i+q_i \\
D_i
\end{array}\right) D_i^2\left(q_i+1\right) Q_i^{D_i} .
\end{aligned}
\end{displaymath}
The Lemma \ref{lemma4} still holds.  
\end{remark} 

\newpage
\bibliography{sample}

\begin{thebibliography}{42}
\providecommand{\natexlab}[1]{#1}
\providecommand{\url}[1]{\texttt{#1}}
\expandafter\ifx\csname urlstyle\endcsname\relax
  \providecommand{\doi}[1]{doi: #1}\else
  \providecommand{\doi}{doi: \begingroup \urlstyle{rm}\Url}\fi

\bibitem[Alain(1998)]{LEBRETON1998263}
L.~B. Alain.
\newblock Filtering and parameter estimation in a simple linear system driven by a fractional brownian motion.
\newblock \emph{Statistics and Probability Letters}, 38\penalty0 (3):\penalty0 263--274, 1998.

\bibitem[Bauer and Kohler(2019)]{Bauer2019}
B.~Bauer and M.~Kohler.
\newblock On deep learning as a remedy for the curse of dimensionality in nonparametric regression.
\newblock \emph{Annals of Statistics}, 47\penalty0 (4):\penalty0 2261--2285, 2019.

\bibitem[Becker et~al.(2019)Becker, Cheridito, and Jentzen]{Becker2019}
S.~Becker, P.~Cheridito, and A.~Jentzen.
\newblock Deep optimal stopping.
\newblock \emph{Journal of Machine Learning Research}, 20\penalty0 (74):\penalty0 1--25, 2019.

\bibitem[Bhansali and Kokoszka(2003)]{Bhansali2003PredictionOL}
R.~Bhansali and P.~Kokoszka.
\newblock \emph{Prediction of long-memory time series: A tutorial review}.
\newblock Springer, Berlin, Heidelberg, 2003.

\bibitem[Biagini et~al.(2004)Biagini, Øksendal, Sulem, and Wallner]{Biagini2004}
F.~Biagini, B.~Øksendal, A.~Sulem, and N.~Wallner.
\newblock An introduction to white-noise theory and malliavin calculus for fractional brownian motion.
\newblock \emph{Proceedings of The Royal Society A Mathematical Physical and Engineering Sciences}, 460\penalty0 (2041):\penalty0 347--372, 2004.

\bibitem[Bottou et~al.(2018)Bottou, Curtis, and Nocedal]{Bottou2018}
L.~Bottou, F.~E. Curtis, and J.~Nocedal.
\newblock Optimization methods for large-scale machine learning.
\newblock \emph{SIAM Review}, 60\penalty0 (2):\penalty0 223--311, 2018.

\bibitem[Buchmann and Klüppelberg(2006)]{Boris2006}
B.~Buchmann and C.~Klüppelberg.
\newblock Fractional integral equations and state space transforms.
\newblock \emph{Bernoulli}, 12\penalty0 (3):\penalty0 431--456, 2006.

\bibitem[Caccetta et~al.(2011)Caccetta, Qu, and Zhou]{Caccetta2011}
L.~Caccetta, B.~Qu, and G.~Zhou.
\newblock A globally and quadratically convergent method for absolute value equations.
\newblock \emph{Computational Optimization and Applications}, 48\penalty0 (1):\penalty0 45--58, 2011.

\bibitem[Dang and Jacques(2017)]{Dang2017}
T.~T.~N. Dang and I.~Jacques.
\newblock Estimation of the hurst and the stability indices of a h-self-similar stable process.
\newblock \emph{Electronic Journal of Statistics}, 11\penalty0 (2):\penalty0 4103--4150, 2017.

\bibitem[Decreusefond and Üstünel(1999)]{Decreusefond1999}
L.~Decreusefond and A.~S. Üstünel.
\newblock Stochastic analysis of the fractional brownian motion.
\newblock \emph{Potential Analysis}, 10\penalty0 (2):\penalty0 177--214, 1999.

\bibitem[Dudley(1974)]{Dudley1974}
R.~M. Dudley.
\newblock Metric entropy of some classes of sets with differentiable boundaries.
\newblock \emph{Journal of Approximation Theory}, 10\penalty0 (3):\penalty0 227--236, 1974.

\bibitem[Dudley(2002)]{Dudley_2002}
R.~M. Dudley.
\newblock \emph{Real Analysis and Probability}.
\newblock Cambridge University Press, 2002.

\bibitem[Duncan(2006)]{DUNCAN2006128}
T.~E. Duncan.
\newblock Prediction for some processes related to a fractional brownian motion.
\newblock \emph{Statistics and Probability Letters}, 76\penalty0 (2):\penalty0 128--134, 2006.

\bibitem[Duncan et~al.(2000)Duncan, Hu, and Pasik-Duncan]{Duncan2000}
T.~E. Duncan, Y.~Hu, and B.~Pasik-Duncan.
\newblock Stochastic calculus for fractional brownian motion. i. theory.
\newblock \emph{SIAM Journal on Control and Optimization}, 38\penalty0 (2):\penalty0 582–612, 2000.

\bibitem[Elliott and Van Der~Hoek(2003)]{Elliott2003}
R.~J. Elliott and J.~Van Der~Hoek.
\newblock A general fractional white noise theory and applications to finance.
\newblock \emph{Mathematical Finance}, 13\penalty0 (2):\penalty0 301--330, 2003.

\bibitem[Fink et~al.(2013)Fink, Klueppelberg, and Zaehle]{Fink2013}
H.~Fink, C.~Klueppelberg, and M.~Zaehle.
\newblock Conditional distributions of processes related to factional brownian motion.
\newblock \emph{Journal of Applied Probability}, 50\penalty0 (1):\penalty0 166--183, 2013.

\bibitem[Garnier and Sølna(2017)]{Garnier2017}
J.~Garnier and K.~Sølna.
\newblock Correction to black--scholes formula due to fractional stochastic volatility.
\newblock \emph{SIAM Journal on Financial Mathematics}, 8\penalty0 (1):\penalty0 560--588, 2017.

\bibitem[Gatheral et~al.(2018)Gatheral, Jaisson, and Rosenbaum]{Gatheral2018}
J.~Gatheral, T.~Jaisson, and M.~Rosenbaum.
\newblock Volatility is rough.
\newblock \emph{Quantitative Finance}, 18\penalty0 (6):\penalty0 933--949, 2018.

\bibitem[Granik et~al.(2019)Granik, Weiss, Nehme, Levin, Chein, Perlson, Roichman, and Shechtman]{Granik2019}
N.~Granik, L.~E. Weiss, E.~Nehme, M.~Levin, M.~Chein, E.~Perlson, Y.~Roichman, and Y.~Shechtman.
\newblock Single-particle diffusion characterization by deep learning.
\newblock \emph{Biophysical Journal}, 117\penalty0 (2):\penalty0 185--192, 2019.

\bibitem[Gripenberg and Norros(1996)]{gripenberg_norros_1996}
G.~Gripenberg and I.~Norros.
\newblock On the prediction of fractional brownian motion.
\newblock \emph{Journal of Applied Probability}, 33\penalty0 (2):\penalty0 400–410, 1996.

\bibitem[Györfi et~al.(2002)Györfi, Kohler, Krzyżak, and Walk]{Gyrfi2002ADT}
L.~Györfi, M.~Kohler, A.~Krzyżak, and H.~Walk.
\newblock \emph{A distribution-free theory of nonparametric regression}.
\newblock Springer New York, NY, 2002.

\bibitem[Hu(2005)]{Hu2005IntegralTA}
Y.~Hu.
\newblock \emph{Integral transformations and anticipative calculus for fractional Brownian motions}.
\newblock American Mathematical Society, 2005.

\bibitem[Imaizumi and Fukumizu(2019)]{Imaizumi2019}
M.~Imaizumi and K.~Fukumizu.
\newblock Deep neural networks learn non-smooth functions effectively.
\newblock In \emph{Proceedings of Machine Learning Research}, volume~89, pages 869--878, Naha, 2019. 22nd International Conference on Artificial Intelligence and Statistics.

\bibitem[Kepten et~al.(2011)Kepten, Bronshtein, and Garini]{Kepten2011}
E.~Kepten, I.~Bronshtein, and Y.~Garini.
\newblock Ergodicity convergence test suggests telomere motion obeys fractional dynamics.
\newblock \emph{Physical Review E}, 83:\penalty0 041919, 2011.

\bibitem[Kleptsyna and Breton(2002)]{Kleptsyna2002}
M.~L. Kleptsyna and A.~Breton.
\newblock Statistical analysis of the fractional ornstein–uhlenbeck type process.
\newblock \emph{Statistical Inference for Stochastic Processes}, 5:\penalty0 229--248, 2002.

\bibitem[Kohler and Langer(2021)]{Kohler2021}
M.~Kohler and S.~Langer.
\newblock On the rate of convergence of fully connected deep neural network regression estimates.
\newblock \emph{Annals of Statistics}, 49\penalty0 (4):\penalty0 2231--2249, 2021.

\bibitem[Kubilius et~al.(2017)Kubilius, Mishura, and Ralchenko]{Kubilius2017}
K.~Kubilius, Y.~Mishura, and K.~Ralchenko.
\newblock \emph{Parameter estimation in fractional diffusion models}.
\newblock Springer Cham, 2017.

\bibitem[Langer(2021)]{Langer2021}
S.~Langer.
\newblock Analysis of the rate of convergence of fully connected deep neural network regression estimates with smooth activation function.
\newblock \emph{Journal of Multivariate Analysis}, 182:\penalty0 104695, 2021.

\bibitem[Livieri et~al.(2018)Livieri, Mouti, Pallavicini, and Rosenbaum]{Livieri2018}
G.~Livieri, S.~Mouti, A.~Pallavicini, and M.~Rosenbaum.
\newblock Rough volatility: Evidence from option prices.
\newblock \emph{IISE Transactions}, 50\penalty0 (9):\penalty0 767--776, 2018.

\bibitem[Lu et~al.(2021)Lu, Meng, Mao, and Karniadakis]{Lu2021}
L.~Lu, X.~Meng, Z.~Mao, and G.~E. Karniadakis.
\newblock Deepxde: A deep learning library for solving differential equations.
\newblock \emph{SIAM Review}, 63\penalty0 (1):\penalty0 208--228, 2021.

\bibitem[Mandelbrot and Van~Ness(1968)]{Mandelbrot1968}
B.~B. Mandelbrot and J.~W. Van~Ness.
\newblock Fractional brownian motions, fractional noises and applications.
\newblock \emph{SIAM Review}, 10\penalty0 (4):\penalty0 422--437, 1968.

\bibitem[Nualart(2006)]{Nualart2006}
D.~Nualart.
\newblock \emph{The Malliavin calculus and related topics}.
\newblock Springer Berlin, Heidelberg, 2006.

\bibitem[Nualart and Saussereau(2009)]{NUALART2009391}
D.~Nualart and B.~Saussereau.
\newblock Malliavin calculus for stochastic differential equations driven by a fractional brownian motion.
\newblock \emph{Stochastic Processes and their Applications}, 119\penalty0 (2):\penalty0 391--409, 2009.

\bibitem[Pipiras and Taqqu(2001)]{Pipiras2001}
V.~Pipiras and M.~S. Taqqu.
\newblock Are classes of deterministic integrands for fractional brownian motion on an interval complete?
\newblock \emph{Bernoulli}, 7\penalty0 (6):\penalty0 873--897, 2001.

\bibitem[Prakasa~Rao(2010)]{Rao2010}
B.~L.~S Prakasa~Rao.
\newblock \emph{Statistical Inference for Fractional Diffusion Processes}.
\newblock John Wiley \& Sons, Ltd, 2010.

\bibitem[Schmidhuber(2015)]{Schmidhuber2015}
J.~Schmidhuber.
\newblock Deep learning in neural networks: An overview.
\newblock \emph{Neural Networks}, 61:\penalty0 85--117, 2015.

\bibitem[Shrestha and Mahmood(2019)]{Shrestha2019}
A.~Shrestha and A.~Mahmood.
\newblock Review of deep learning algorithms and architectures.
\newblock \emph{IEEE Access}, 7:\penalty0 53040--53065, 2019.

\bibitem[Stone(1982)]{Stone1982}
C.~J. Stone.
\newblock Optimal global rates of convergence for nonparametric regression.
\newblock \emph{The Annals of Statistics}, 10\penalty0 (4):\penalty0 1040--1053, 1982.

\bibitem[Tudor and Viens(2007)]{Tudor2007}
C.~A. Tudor and F.~G. Viens.
\newblock Statistical aspects of the fractional stochastic calculus.
\newblock \emph{Annals of Statistics}, 35\penalty0 (3):\penalty0 1183--1212, 2007.

\bibitem[Viens and Zhang(2019)]{Viens2017AMA}
F.~Viens and J.~Zhang.
\newblock A martingale approach for fractional brownian motions and related path dependent pdes.
\newblock \emph{The Annals of Applied Probability}, 29\penalty0 (6):\penalty0 3489--3540, 2019.

\bibitem[Wang(1996)]{Wang1996}
Y.~Wang.
\newblock Function estimation via wavelet shrinkage for long-memory data.
\newblock \emph{The Annals of Statistics}, 24\penalty0 (2):\penalty0 466--484, 1996.

\bibitem[Yao and Doroslovacki(2003)]{Yao2003PredictionOL}
L.~Yao and M.~Doroslovacki.
\newblock Prediction of long-range-dependent discrete-time fractional brownian motion process.
\newblock \emph{2003 IEEE International Conference on Acoustics, Speech, and Signal Processing, 2003. Proceedings, ICASSP '03}, 4:\penalty0 213--216, 2003.

\end{thebibliography}

\end{document}